\definecolor{cvprblue}{rgb}{0.21,0.49,0.74}
\title{DiP: Taming Diffusion Models in Pixel Space}
\author{Zhennan Chen$^{1,2}$\thanks{Work done during the internship at Tencent. $^\dagger$Project Leader. $^\ddagger$Corresponding Author.}  ~ Junwei Zhu$^2$$^\dagger$ ~ Xu Chen$^{2}$ ~ Jiangning Zhang$^2$ ~ Xiaobin Hu$^3$ \\
	Hanzhen Zhao$^3$ ~ Chengjie Wang$^2$ ~ Jian Yang$^1$ ~ Ying Tai$^1$$^\ddagger$ \\
	$^1$Nanjing University  ~~  $^2$Tencent Youtu Lab~~ $^3$National University of Singapore ~~ \\
    {\small \textcolor{magenta}{\url{https://github.com/NJU-PCALab/DiP}}}
    \vspace{-8mm}
}
\begin{document}
\maketitle

\begin{abstract}

Diffusion models face a fundamental trade-off between generation quality and computational efficiency. Latent Diffusion Models (LDMs) offer an efficient solution but suffer from potential information loss and non-end-to-end training. In contrast, existing pixel space models bypass VAEs but are computationally prohibitive for high-resolution synthesis. To resolve this dilemma, we propose DiP, an efficient pixel space diffusion framework. DiP decouples generation into a global and a local stage: a Diffusion Transformer (DiT) backbone operates on large patches for efficient global structure construction, while a co-trained lightweight Patch Detailer Head leverages contextual features to restore fine-grained local details. This synergistic design achieves computational efficiency comparable to LDMs without relying on a VAE. DiP is accomplished with up to 10$\times$ faster inference speeds than previous method while increasing the total number of parameters by only 0.3\%, and achieves an 1.79 FID score on ImageNet 256$\times$256. 

\end{abstract}    
\begin{figure}[h]
    \centering
    \includegraphics[width=\linewidth]{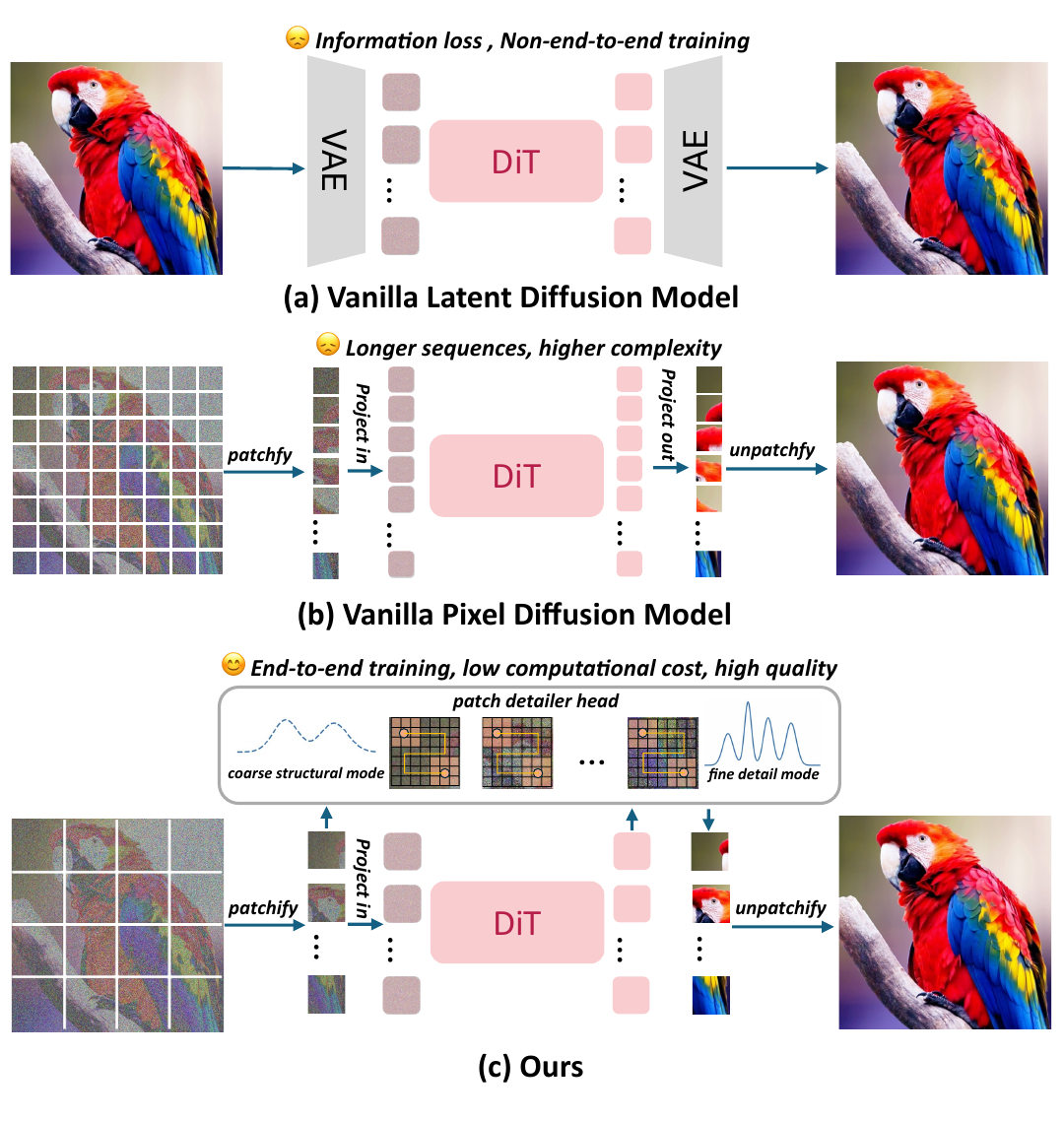}
    \vspace{-9mm}
    \caption{Comparison of vanilla latent diffusion model, vanilla pixel diffusion model and our method. Vanilla LDMs utilize VAEs to balance computational efficiency and generation quality. Vanilla pixel diffusion models use small patches to pursue detailed generation quality. Our method achieves high-quality generation while maintaining efficient end-to-end training in pixel space.}
    \label{Fig: teaser1}
    \vspace{-4mm}
\end{figure}

\vspace{-6mm}
\section{Introduction}

Diffusion models~\cite{sohl2015deep, ho2020denoising, song2020score, peebles2023scalable,ramesh2022hierarchical,saharia2022photorealistic,yu2022scaling, xie2024sana,song2020denoising,ho2022classifier,karras2024guiding} have reshaped the landscape of generative visual content. With its outstanding generative capabilities of fidelity and diversity, they have established new state-of-the-art benchmarks across a multitude of tasks, including image synthesis~\cite{ci2025describe,chen2023diffusion,ye2023ip,wang2024instantid,zhao2025ultrahr, chen2025ragd, zhou2024migc,zhou20243dis,zhao2024wavelet,chen2023pixart,du2025textcrafter}, video generation~\cite{fan2025instancecap,nan2024openvid,zhang2021rstnet,zhao2026luve}, and 3D object creation~\cite{zhang2024tar3d,zhang2025ar, zhang2024temo}, decisively surpassing prior paradigms like Generative Adversarial Networks (GANs)~\cite{zhao2024cycle,goodfellow2020generative,radford2015unsupervised,mirza2014conditional,zhu2017unpaired,karras2019style}. However, this generative prowess is underpinned by immense computational demands. Consequently, the inherent \textit{trade-off between generation quality and computational efficiency} thus stands as one of the most critical challenges in the field of diffusion models today.

To mitigate this challenge, Latent Diffusion Models (LDMs)~\cite{rombach2022high} have emerged as the de facto standard. By employing a pre-trained autoencoder (VAE)~\cite{kingma2013auto} to compress high-resolution images into a compact latent space, LDMs significantly reduce the computational complexity of the iterative denoising process, as shown in Figure~\ref{Fig: teaser1}(a). Nevertheless, this approach is not without its limitations, including \textit{potential information loss}~\cite{yao2025reconstruction,kilian2024computational,chen2024deep,gupta2024photorealistic} \textit{during VAE compression and a non-end-to-end training pipeline.}

\newcommand{\graydashedline}{%
  (\textcolor{gray}{\hdashrule{1.5em}{1.4pt}{2pt 1.3pt}})
}
\begin{figure}[t]
    \centering
    \includegraphics[width=\linewidth]{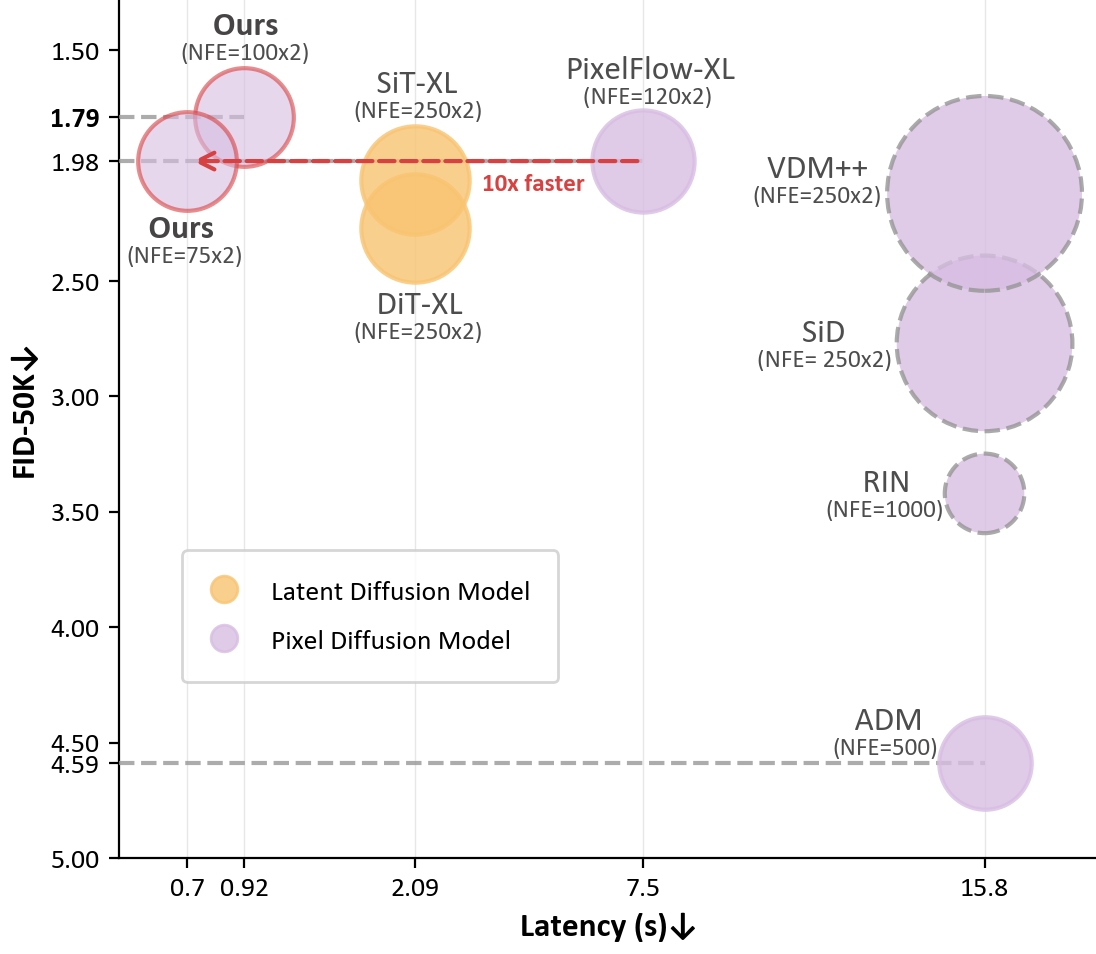}
    \vspace{-8mm}
    \caption{Our method achieves the best FID score with minimal computational cost. (Note: LDM latency includes VAE. The methods marked with dashed lines \graydashedline{} are our estimated latency based on the sampling method in the corresponding paper, and should actually be greater than the marked values. The rest methods are the actual test results in the same hardware environment.)}
    \label{Fig: teaser2}
    \vspace{-4mm}
\end{figure}

The most direct solution to eliminate the shortcomings of LDMs is to train a diffusion model in pixel space. However, existing pixel space diffusion models \cite{hoogeboom2024simpler,dhariwal2021diffusion,chen2025pixelflow,kingma2023understanding,hoogeboom2023simple,teng2023relay}, particularly those based on the powerful Transformer architecture~\cite{vaswani2017attention}, face a severe scalability issue. As shown in Figure~\ref{Fig: teaser1}(b), to capture fine-grained details, they typically rely on small input patches (\eg 2$\times$2 or 4$\times$4), causing the input sequence length to grow quadratically with image resolution. This quadratic scaling renders \textit{high-resolution training and inference computationally intractable}, creating a formidable barrier to their practical application.

In this paper, we aim to resolve this critical trade-off in pixel diffusion models. We propose an efficient pixel space diffusion framework called DiP. 
As shown in Figure~\ref{Fig: teaser1}(c), for efficient global structure construction, we employ a DiT~\cite{peebles2023scalable} backbone. Critically, we configure it to operate on large image patches (\eg, 16$\times$16). This setting choice drastically reduces the input sequence length, aligning it with that of mainstream LDMs operating in latent space. Consequently, our model achieves computational efficiency comparable to LDMs while remaining entirely VAE-free, enabling it to effectively capture the global layout and semantic content of the image. However, operating on large patches alone inevitably leads to blurry outputs lacking high-frequency details. To address this, we introduce a lightweight Patch Detailer Head (only 0.3\% increase in total parameters), which is not a post-processing module but an integral component co-trained with the DiT backbone. For each large patch, it receives contextual features from the DiT and leverages its strong local receptive fields to synthesize the missing high-frequency information. This synergistic design allows the DiT backbone to focus on the computationally demanding task of global consistency, while the efficient Patch Detailer Head specializes in local texture and detail restoration. As demonstrated in Figure~\ref{Fig: teaser2}, our approach sets a new state-of-the-art on the efficiency-quality frontier, achieving superior FID scores at significantly lower latency compared to existing methods.
Our main contributions are summarized as follows:
\begin{itemize}

    \item 
        We propose DiP, a new end-to-end pixel diffusion model framework that effectively alleviates the trade-off between generation quality and computational efficiency through synergistic global-local modeling.
    \item 
        We systematically validate the impact of different architectural designs of our framework, hoping to provide the community with a unified, principled framework.
    \item 
        On ImageNet generation benchmarks, our framework achieves state-of-the-art performance and lowest inference latency with low training costs.
\end{itemize}

\section{Related Work}

\noindent\textbf{Latent Diffusion Models.}
Latent Diffusion Models (LDMs)~\cite{rombach2022high,podell2023sdxl,esser2024scaling,flux,xie2024sana, chen2023pixart,zhang2023adding} have become the de-facto paradigm for large-scale generative modeling due to their computational efficiency and scalability. 
By performing the diffusion process in a compressed latent space learned by a VAE, LDMs drastically reduce memory and computational costs.
Architectural advancements within this paradigm, such as replacing the U-Net~\cite{ronneberger2015u} backbone with a more scalable Transformer (DiT)~\cite{peebles2023scalable}, have further pushed the boundaries of generation quality. Despite their success, this efficiency comes at a cost: the VAE acts as an information bottleneck, imposing a hard ceiling on the final image fidelity and often introducing subtle reconstruction artifacts~\cite{skorokhodov2025improving,yao2025reconstruction}. Our work circumvents these limitations by proposing an equally efficient architecture that operates directly in pixel space, thereby eliminating the VAE-induced quality constraints.

\noindent\textbf{Pixel Diffusion Models.}
Recent years have seen renewed interest in pixel space diffusion models that aim to maximize signal fidelity while addressing computational inefficiency. Early works such as ADM~\cite{dhariwal2021diffusion} and DDPM~\cite{ho2020denoising} demonstrated the power of diffusion but were constrained by the quadratic complexity of their backbones, rendering them impractical for high resolutions. Multi-scale and image patch-based methods~\cite{hoogeboom2024simpler,ding2023patched,chen2025pixelflow,hoogeboom2023simple,hoogeboom2024simpler} further enhance the generation effect by decomposing large images into small patches. However, these methods essentially simulate locality through brute-force training, which leads to extremely low efficiency. 
Concurrent work JiT~\cite{li2025jit} demonstrates that high-dimensional data in pixel space can be effectively modeled by predicting clean images.
Recent work by PixelNerd~\cite{wang2025pixnerd} leverages a Transformer to process image features, which then conditions a NeRF-like coordinate network to act as a renderer for finely reconstructing each image patch, achieving impressive performance. Nevertheless, PixelNerd tightly couples the success of its method with this specific NeRF-like rendering mechanism, which may limit the exploration of a broader design space. We argue that the key to achieving efficient and high-quality pixel space generation lies not in relying on a specific structure like NeRF, but rather in the design principle of decoupling global structure construction from local detail refinement. Based on this insight, this paper aims to provide a more principled, efficient, and general solution for pixel space diffusion models.

\section{Methods}
\subsection{Preliminaries}

A diffusion process gradually perturbs an initial data sample $\mathbf{x}_{0} \sim q\left(\mathbf{x}_{0}\right)$ from the true data distribution into isotropic Gaussian noise:
\begin{equation}
    \mathbf{x}_{t}=\sqrt{\bar{\alpha}}_{t} \mathbf{x}_{0}+\sqrt{1-\bar{\alpha}_{t}} \epsilon, \quad \text { where } \epsilon \sim \mathcal{N}(0, \mathbf{I}),
\end{equation}
where $\alpha_{t}=1-\beta_{t}$ and $\bar{\alpha}_{t}=\prod_{i=1}^{t} \alpha_{i}$. $\{\beta_{t}\}^{T}_{i=1}$ is a predefined variance schedule that controls the noise level at each step. As $t \rightarrow T, \bar{\alpha}_{t} \rightarrow 0$, and the distribution of $\mathbf{x}_{T}$ converges to a standard normal distribution $p\left(\mathbf{x}_{T}\right) \approx \mathcal{N}(0, \mathbf{I})$.

This discrete formulation can be generalized to a continuous-time setting via a stochastic differential equation (SDE):
\begin{equation}
    d \mathbf{x}=f(\mathbf{x}, t) d t+g(t) d \mathbf{w},
\end{equation}
where $f(\cdot, t)$ is the drift and $g(t)$ is the diffusion coefficient. 

The trajectory of this reverse process is governed by a corresponding probability flow ordinary differential equation (ODE):
\begin{equation}
    d \mathbf{x}=\left[f(\mathbf{x}, t)-g(t)^{2} \nabla_{\mathbf{x}} \log p_{t}(\mathbf{x})\right] d t.
\end{equation}

Learning to generate data is thus equivalent to learning the score function $\log p_{t}(\mathbf{x})$ or the associated vector field of this ODE. To train a neural network for this task, several objectives have been proposed. DDPM trains a model $\epsilon_{\theta}\left(\mathbf{x}_{t}, t\right)$ to predict the noise component $\theta$ from a noisy sample $x_{t}$:
\begin{equation}
    \mathcal{L}_{\mathrm{DDPM}}=\mathbb{E}_{t, \mathbf{x}_{0}, \epsilon}\left[\left\|\epsilon-\epsilon_{\theta}\left(\mathbf{x}_{t}, t\right)\right\|^{2}\right].
\end{equation}

Flow Matching (FM)~\cite{esser2024scaling} provides a simulation-free paradigm for directly learning the vector field. It defines a conditional probability path $p_{t}\left(\mathbf{x} \mid \mathbf{x}_{0}\right)$ and a corresponding target vector field $u_{t}(\mathbf{x})$. A network $v_{\theta}(\mathrm{x}, t)$ is then trained to regress this field by minimizing the loss:
\begin{equation}
    \mathcal{L}_{\mathrm{FM}}=\mathbb{E}_{t, p_{t}\left(\mathbf{x} \mid \mathbf{x}_{0}\right)}\left[\left\|u_{t}(\mathbf{x})-v_{\theta}(\mathbf{x}, t)\right\|^{2}\right].
\end{equation}

\begin{figure}[h]
    \centering
    \includegraphics[width=\linewidth]{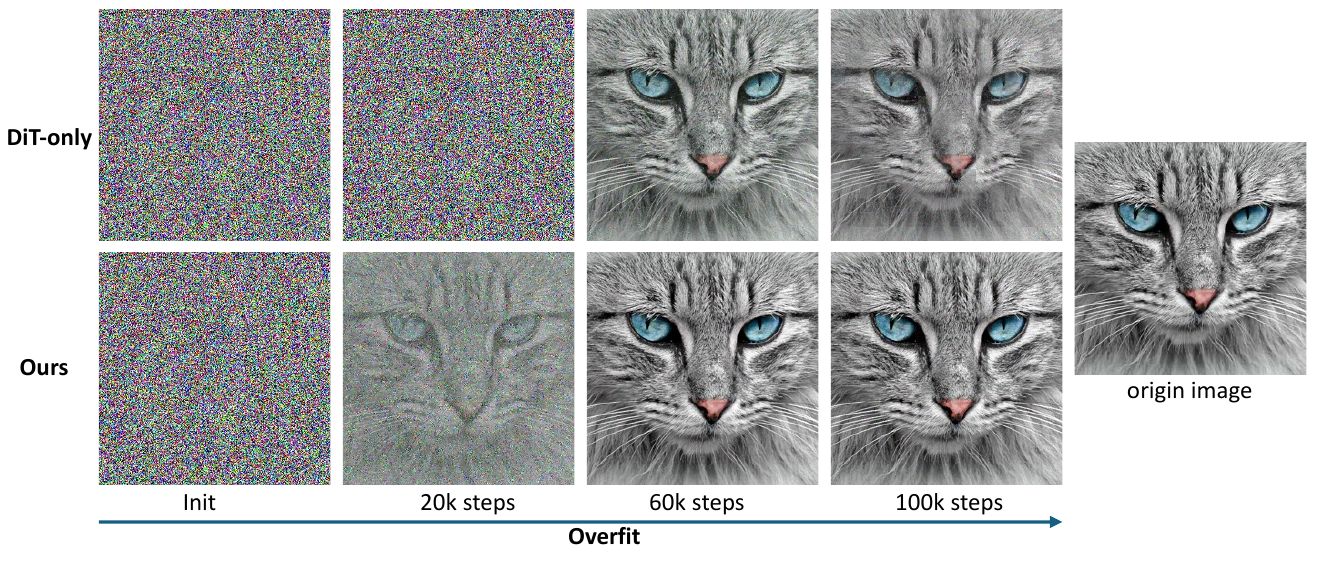}
    \vspace{-6mm}
    \caption{Overfitting the DiT-only model using a single image in pixel space leads to poor detail reconstruction. Introducing a local inductive bias achieves better reconstruction and accelerates convergence. Please zoom in for details.}
    \label{Fig:overfit}
\end{figure}

\begin{figure}[h]
    \centering
    \includegraphics[width=\linewidth]{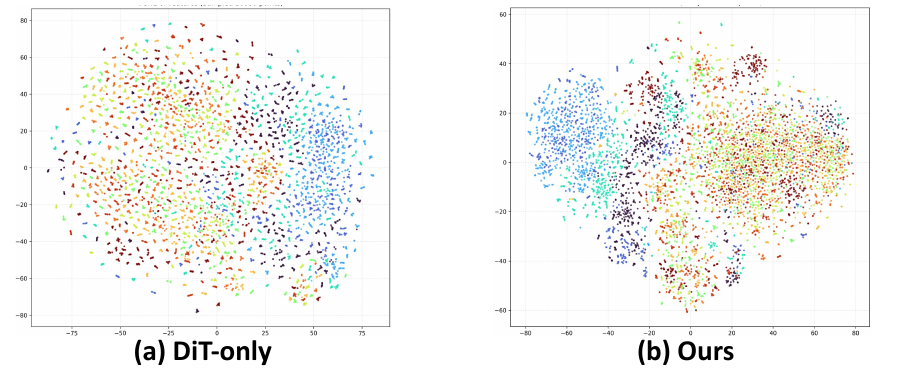}
    \vspace{-6mm}
    \caption{The t-SNE visualization of feature space. In the ImageNet validation set, 100 samples were randomly selected from each of the 10 classes for feature visualization. Features are extracted using DiT-only and our method, with each class shown in a distinct color.}
    \label{Fig:vis_feature_method}
    \vspace{-4mm}
\end{figure}

\begin{figure*}[t]
    \centering
    \includegraphics[width=\linewidth]{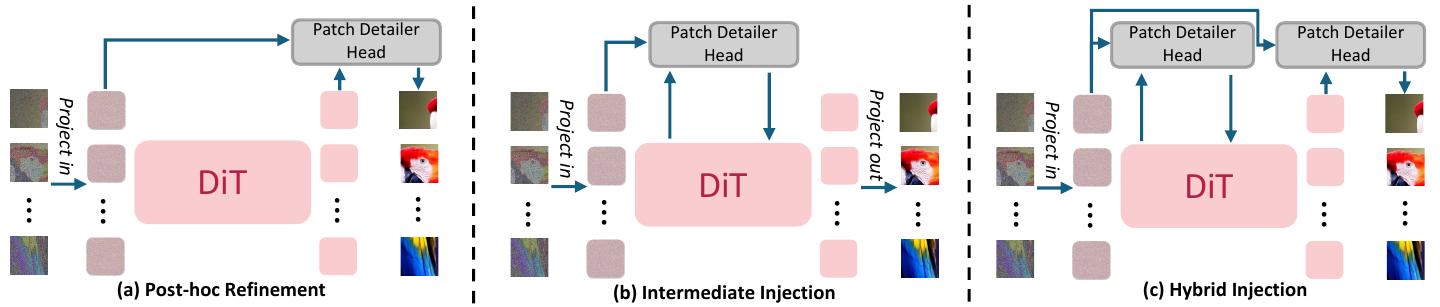}
    \vspace{-6mm}
    \caption{Patch Detailer Head with local inductive bias was placed at different locations in the model. The results in Sec.~\ref{Analysis} show that all three methods offer gains compared to DiT-only. 
    }
    \label{Fig:diff_insert}
    \vspace{-3mm}
\end{figure*}

\vspace{-4mm}
\subsection{Motivation}

DiT models the long-range dependencies of an image by partitioning it into a sequence of patches, thereby forming a coherent global structure. However, while the self-attention mechanism excels at modeling macroscopic relationships between patches, it compresses the rich spatial information within each patch into a single, flattened token. This design introduces an inherent limitation: the model can adeptly learn the coarse-level layout and arrangement of patches but struggles to model the fine-grained textures and high-frequency details within each patch, consequently limiting the upper bound of its image generation performance.

To empirically validate this, we conduct a preliminary experiment by overfitting a DiT model on a single high-resolution image in the pixel space. As shown in Figure~\ref{Fig:overfit}, the model successfully captures the global layout and color palette but fails to render fine textures and sharp edges, resulting in a blurry reconstruction. This result demonstrates that when a DiT architecture operates directly on images, it suffers from a lack of inductive bias~\cite{yang2024diffusion,goyal2022inductive,an2024inductive,kadkhodaie2023generalization} at the local level, rendering it incapable of achieving precise pixel-level reconstruction within each patch.

This motivates our core design principle: to augment the global Transformer with a dedicated module that explicitly re-injects this missing inductive bias for local details. In this way, our model can leverage the computational efficiency afforded by large patch sizes while simultaneously generating high-quality images with fine-grained details. As shown in  Figure~\ref{Fig:vis_feature_method}, our method achieves tighter intra-class clusters and clearer inter-class separation, whereas vanilla DiT exhibits more mixed distributions. This means that the introduction of local inductive bias can more effectively integrate local textures and edge cues in pixel space and thus improves high-level semantic separability and feature consistency. Such improvements are expected to yield more stable structural alignment and better detail during generation process.

\subsection{Framework}

Based on the above observation,  we introduce a framework for high-quality image generation that operates directly in pixel space. DiP first employs a DiT to model the global structure and long-range dependencies of the image. Subsequently, a lightweight Patch Detailer Head refines the output at the patch level, introducing a  local inductive bias to synthesize high-frequency details.

\noindent\textbf{Global Structure Construction (DiT Backbone).}
Given a noisy image $x_{t}\in \mathbb{R}^{H \times W \times 3}$ at timestep $t$,  we first partition it into a sequence of non-overlapping patches. Each patch has a size of $P$$\times$$P$ (we set P$=$16), resulting in a sequence of $N$$=$$(H \times W) / P^{2}$ patches. This patching strategy ensures our pixel space model maintains a computational footprint comparable to latent space DiT models. Along with a timestep embedding and positional embeddings, they are fed into a series of DiT blocks to produce a sequence of context-aware output features $S_{\text {global}} \in \mathbb{R}^{N \times D}$, where $D$ is the feature dimension.

\noindent\textbf{Local Detail Refinement (Patch Detailer Head).}
The Patch Detailer Head operates independently and in parallel on each patch. For each patch $i$, it takes two inputs: the corresponding global context map $s_{i}$ and the original noisy pixel patch $p_{i} \in \mathbb{R}^{3 \times P \times P}$, where $s_{i} \in \mathbb{R}^{D \times 1 \times 1}$ is obtained by reshaping and expanding $S_{\text{global}}$. Its objective is to leverage the global context from $S_{\text {global }}$ to accurately interpret the local noisy information in $p_{i}$, ultimately predicting the corresponding noise component $\epsilon_{i} \in \mathbb{R}^{3 \times P \times P}$ for that patch. After processing all $N$ patches in parallel, the resulting sequence of predicted noise patches $\left\{\epsilon_{i}\right\}_{i=1}^{N}$ is reassembled into a full-resolution noise prediction map.

\begin{figure}[t]
    \centering
    \includegraphics[width=\linewidth]{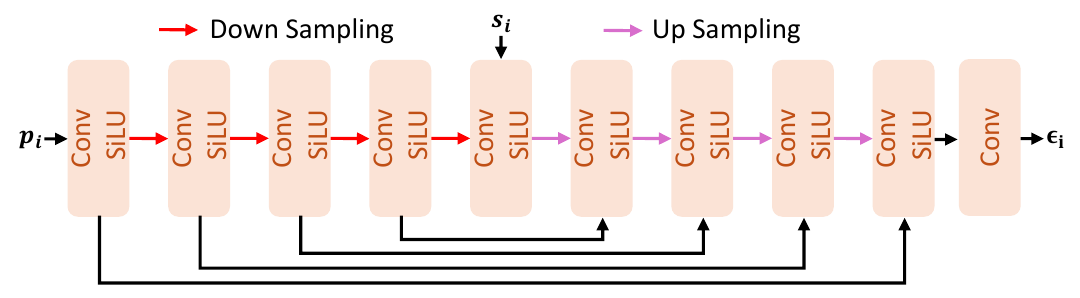}
    \vspace{-6mm}
    \caption{Patch Detailer Head framework. This design introduces the local inductive bias that DiT-only lacks with a low number of parameters, resulting in a high-quality image with rich detail.}
    \label{Fig:Patch_Enhancer}
    \vspace{-4mm}
\end{figure}

\begin{table*}[t]
	\centering
	\resizebox{\textwidth}{!}{
	\setlength{\tabcolsep}{3.8pt}
	\footnotesize
	\renewcommand\arraystretch{1.1}
\begin{tabular}{lccccccccc}
\toprule
\multicolumn{1}{c}{\multirow{2}{*}{\textbf{Method}}} 

& \multicolumn{9}{c}{\textbf{ ImageNet 256×256}} \\

\cmidrule(lr){2-10} 
& \textbf{FID}$\downarrow$
& \textbf{sFID}$\downarrow$ 
&\textbf{IS}$\uparrow$ 
& \textbf{Prec.}$\uparrow$ 
& \textbf{Rec.}$\uparrow$  
&\textbf{Latency}$\downarrow$
& \textbf{Epochs} 
& \textbf{NFE}
& \textbf{Params} \\

\midrule

\textbf{\textit{Latent Generative Models}} \\

\arrayrulecolor{gray!50}
\cmidrule{1-10} 
\arrayrulecolor{black} 
LDM~\cite{rombach2022high}        &3.60 &- &247.7 &0.87 &0.48 &- &170 &250x2  &400M+86M \\
DiT-XL~\cite{peebles2023scalable}  &2.27 &4.60 &278.2 &0.83 &0.57 &2.09s &1400 &250x2  &675M+86M\\
MaskDiT-G~\cite{zheng2023fast}  &2.28 &5.67 &276.6 &0.80 &0.61 &- &1600 &79x2  &675M+86M \\
SiT-XL~\cite{ma2024sit}     &2.06 &4.50 &270.3 &0.82 &0.59 &2.09s &1400 &250x2  &675M+86M\\
FlowDCN-XL~\cite{wang2024flowdcn} &2.00 &4.33 &263.1 &0.82 &0.58 &- &400 &250x2  &618M+86M \\ 

\midrule
\textbf{\textit{Pixel Generative Models}} \\
\arrayrulecolor{gray!50} 
\cmidrule{1-10}
\arrayrulecolor{black} 
CDM~\cite{ho2022cascaded}              &4.88 &- &158.7 &- &- &- &2160 &4100  &-  \\
ADM~\cite{dhariwal2021diffusion}              &3.94 &6.14 &215.8 &\textbf{0.83} &0.53 &15.80s &400 &500  &554M\\
JetFormer-L~\cite{tschannen2024jetformer}      &6.64 &- &- &0.69 &0.56 &- &500 &-   &2.8B\\
SiD~\cite{hoogeboom2023simple}              &2.77 &- &211.8 &- &-  &- &800 &250×2   &2.0B\\
VDM++~\cite{kingma2023understanding}            &2.12 &- &278.1 &- &- &- &- &250×2  &2.46B\\
RIN~\cite{jabri2022scalable}              &3.42 &- &182.0 &- &- &- &480 &1000  &410M  \\
Farmer/16~\cite{zheng2025farmer}        &3.96 &- &250.6 &0.79 &0.50 &- &320 &- &1.9B\\
PixelFlow-XL/4~\cite{chen2025pixelflow}   &1.98 &5.83 &282.1 &0.81 &0.60 &7.50s &320 &120x2 &677M  \\

\midrule
DiP-XL/16 &2.16 &4.79 &276.8 &0.82 &0.61 &0.92s &160 &100x2  &631M  \\
DiP-XL/16 &1.98 &\textbf{4.57} &\textbf{282.9} &0.80 &0.62 &\textbf{0.70s} &320 &75x2   &631M \\
DiP-XL/16 &\textbf{1.79} &4.59 &281.9 &0.80 &\textbf{0.63} &0.92s &600 &100x2  &631M \\

\bottomrule
\end{tabular}
}
\vspace{-2mm}
\caption{Comparison of the performance of different methods on ImageNet 256×256 with Euler solver and CFG. Performance metrics are annotated with $\uparrow$ (higher is better) and $\downarrow$ (lower is better). Our method achieves the best FID score. Furthermore, compared to other pixel diffusion models, we achieve the best performance across all metrics with the lowest latency. 
}
\label{Tab: Quantitative+exp}
\vspace{-3mm}
\end{table*}

\subsection{Architecture Design}
\noindent\textbf{Exploring Patch Detailer Head Architectures.}
We investigated several architectures for the Patch Detailer Head, each embodying a different form of inductive bias. Our goal is to present a simple, effective and highly efficient design.
\begin{itemize}
    \item \textit{Standard MLP.} As a simple baseline, we used MLP that takes the feature vector $s_{i}$ and a flattened noisy patch $p_{i}$ as input. While straightforward, this design lacks any inherent spatial bias, treating all pixels within the patch as an unordered set.
    
    \item \textit{Coordinate-based MLP.} To introduce spatial awareness, a design inspired by NeRF can be adopted \cite{wang2025pixnerd}. For each pixel within $p_{i}$, we concatenate its normalized 2D coordinates. $s_{i}$ is used to dynamically generate the weights of a small, coordinate-based MLP. This implicitly learns a continuous function of the image patch, but it lacks the strong priors for local texture and structure that convolutions provide.
    
    \item \textit{Intra-Patch Attention.} We explored using a small Transformer to operate on the pixels within each patch. Each $P$$\times$$P$ patch is treated as a sequence of $P^{2}$ pixel tokens. This allows for complex, content-aware interactions between pixels but is computationally intensive and may not be as efficient as convolutions for learning local patterns.

    \item \textit{Convolutional U-Net (Our Final Choice).}  We found that a lightweight convolutional U-Net provided the best performance. The hierarchical structure of downsampling and upsampling paths, combined with skip connections, is exceptionally well-suited for capturing multi-scale spatial features and ensuring local continuity. The inherent inductive biases of convolutions (locality and translation equivariance) are highly effective for denoising local textures and edges. As shown in Figure~\ref{Fig:Patch_Enhancer}, we instantiate the Patch Detailer Head with a shallow U-Net, which includes 4 downsampling and 4 upsampling blocks.  Each block consists of a sequence of Convolution, SiLU activation and pooling layer. The global feature vector $s_{i}$ is concatenated channel-wise with the downsampling output at the bottleneck. This design allows the global semantic information to guide the local refinement process effectively while keeping the parameter count minimal.
      
\end{itemize}

We provide experimental evidence for this part in Sec.~\ref{Analysis}.
Furthermore, we present a preliminary theoretical analysis in Appendix to model the necessity and effectiveness of the Patch Detailer Head, aiming to offer deeper insights.

\begin{figure*}[t]
    \centering
    \includegraphics[width=\linewidth]{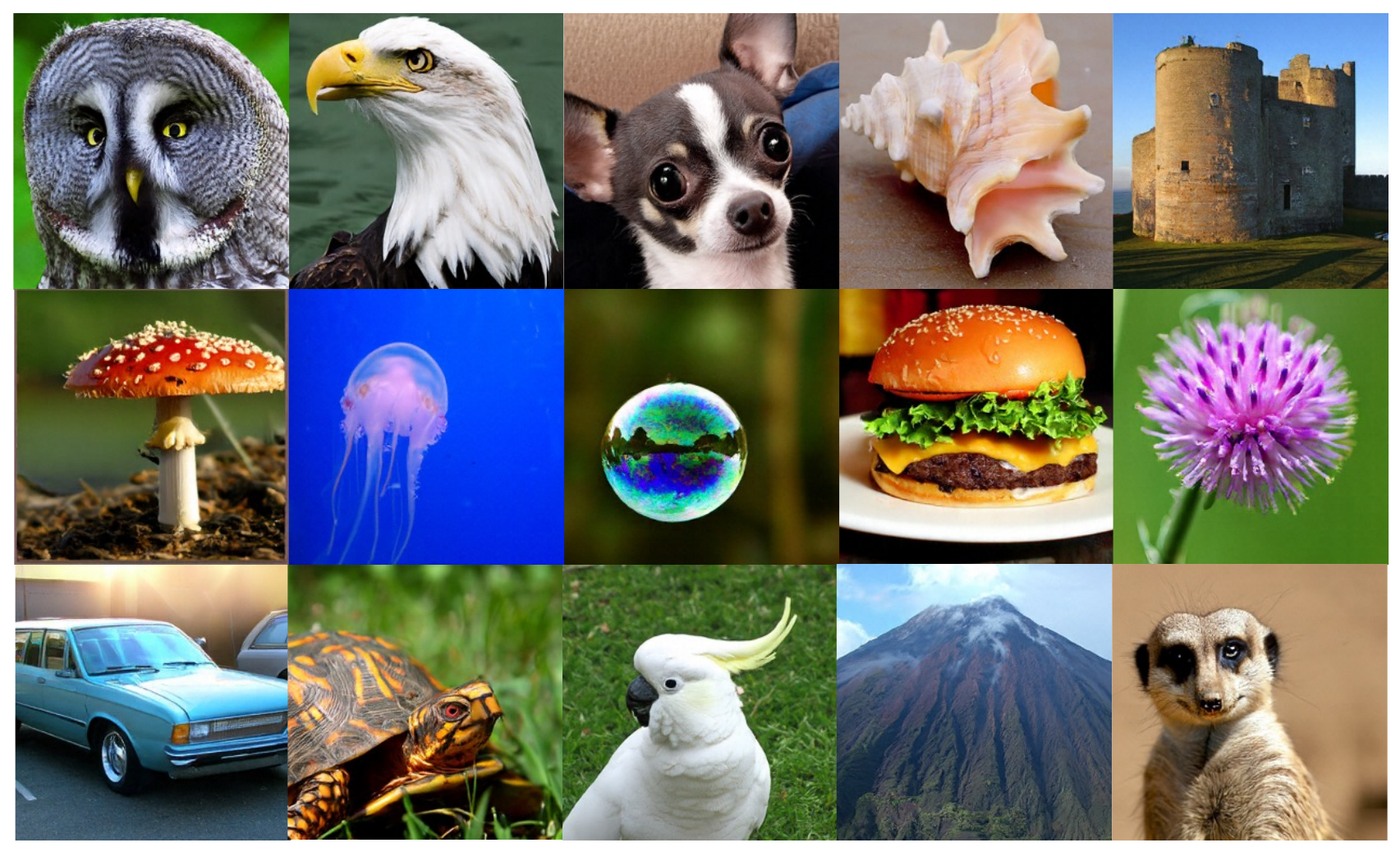}
    \vspace{-6mm}
    \caption{Qualitative samples from our model trained at 256 × 256 resolution with classifier-free guidance scale of 4.0. DiP demonstrates fine-grained detail, and high visual quality.}
    \label{Fig:vis_256}
    \vspace{-3mm}
\end{figure*}

\noindent\textbf{Placement of the Patch Detailer Head.}
Since the main weakness of DiT backbone being trained directly in pixel space is its lack of local awareness, a natural design question arises: does introducing Patch Detailer Head at different locations in the model also bring gains? As shown in Figure~\ref{Fig:diff_insert}, we investigated three placement strategies:

\begin{itemize}
    \item \textit{Post-hoc Refinement.} The Patch Detailer Head is placed only after the final DiT block. This creates a clean separation of concerns: the DiT is solely responsible for global modeling, and Patch Detailer Head is solely responsible for local refinement.
    
    \item \textit{Intermediate Injection.} The Patch Detailer Head is inserted between DiT blocks. The refined patch representations are then projected back and fed into the subsequent DiT blocks.

    \item \textit{Hybrid Injection.} Patch Detailer Head are placed both at an intermediate stage and at the end of the DiT.
    
\end{itemize}

Our experiments in Sec.~\ref{Analysis} revealed that all three strategies yield comparable performance gains over the baseline DiT. However, the Post-hoc Refinement strategy has a unique advantage: by placing the Head at the end, we treat the standard DiT architecture as a fixed, black-box backbone. This approach requires no modification to the DiT's internal structure, greatly simplifying implementation and potentially allowing for the use of pre-trained DiT checkpoints. Given its optimal balance between high performance and implementation simplicity, we adopt the post-refinement strategy as the final architecture.

\vspace{-2mm}
\section{Experiments}

\subsection{Setup}
\noindent\textbf{Implementation Details.}
Our experiments are conducted on the class-conditional ImageNet dataset and original images are center-cropped and resized to $256 \times 256$ resolution. We set global batch size to 256. We use DDT~\cite{wang2025ddt}, a variant of DiT, as our model backbone and apply an Exponential Moving Average (EMA) on the model weights with a decay factor of 0.9999. In Patch Detailer Head, the kernel size of the middle layers is set to 3, the padding to 1, and the kernel size of the last convolutional layer is set to 1. Unless otherwise specified, all samples were generated using the Euler-100 solver. More details are included in Appendix.

\noindent\textbf{Evaluation Protocol.}
To ensure a comprehensive and rigorous assessment of our model's generative capabilities, we adhere to the evaluation protocol established by ADM~\cite{dhariwal2021diffusion}. We employ a suite of standard quantitative metrics to measure performance across different dimensions. Specifically, we use the Fréchet Inception Distance (FID)~\cite{heusel2017gans} to assess overall realism and fidelity, the Spatial FID (sFID)~\cite{nash2021generating} to evaluate spatial and structural coherence, and the Inception Score (IS)~\cite{salimans2016improved} to measure class-conditional diversity. Furthermore, we report Precision (Prec.)/Recall (Rec.)~\cite{kynkaanniemi2019improved} to respectively quantify the fidelity of individual samples and the model's ability to cover the true data distribution. All metrics are calculated using 50,000 generated samples.

\begin{table*}[t]
	\centering
	\resizebox{\textwidth}{!}{
	\setlength{\tabcolsep}{3.8pt}
	\footnotesize
	\renewcommand\arraystretch{1.1}
\begin{tabular}{lcccccccc}
\toprule
\multicolumn{1}{c}{\multirow{2}{*}{\textbf{Method}}} 

& \multicolumn{8}{c}{\textbf{ ImageNet 256×256}} \\

\cmidrule(lr){2-9} 
& \textbf{FID}$\downarrow$
& \textbf{sFID}$\downarrow$ 
&\textbf{IS}$\uparrow$ 
& \textbf{Prec.}$\uparrow$ 
& \textbf{Rec.}$\uparrow$  
&\textbf{Training Cost} 
& \textbf{Latency}
& \textbf{Params} \\

\midrule

\textbf{\textit{Scaling Up DiT}} \\

\arrayrulecolor{gray!50} 
\cmidrule{1-9} 
\arrayrulecolor{black} 
DiT-only (26 Layers, 1152 Hidden Dim)  &5.28 &6.56 &243.8 &0.74 &0.55 &84$\times$8 GPU Hours &0.88s &629M \\
DiT-only (32 Layers, 1152 Hidden Dim)  &4.91 &6.44 &251.7 &0.74 &0.56 &103$\times$8 GPU Hours &1.05s &772M \\
DiT-only (26 Layers, 1280 Hidden Dim)  &4.28 &6.26 &249.6 &0.77 &0.56 &103$\times$8 GPU Hours &1.06s &776M \\
DiT-only (26 Layers, 1536 Hidden Dim)  &2.83 &5.16 &\textbf{285.6} &0.80 &0.57 &149$\times$8 GPU Hours &1.49s &1.1B \\

\midrule
\textbf{\textit{Different Patch Detailer Head}} \\
\arrayrulecolor{gray!50}
\cmidrule{1-9} 
\arrayrulecolor{black} 
Standard MLP              &6.92 &7.27 &210.9 &0.79 &0.41 &93$\times$8 GPU Hours &0.91s &630M\\
Intra-Patch Attention     &2.98 &5.16 &275.0 &0.80 &0.56 &96$\times$8 GPU Hours &0.94s &630M\\
Coordinate-based MLP      &2.20 &4.49 &284.6 &0.80 &0.58 &123$\times$8 GPU Hours &0.95s &700M\\
Convolutional U-Net      &\textbf{2.16} &\textbf{4.79} &276.8 &\textbf{0.82} &\textbf{0.61} &92$\times$8 GPU Hours &0.92s &631M\\

\bottomrule
\end{tabular}
}
\vspace{-2mm}
\caption{Impact of different design schemes on computational overhead and performance.}
\label{Tab: Analysis}
\vspace{-3mm}
\end{table*}

\subsection{Main Result}
\noindent\textbf{Performance.}
Table~\ref{Tab: Quantitative+exp} presents a comprehensive comparison against recent SOTA methods with classifier-free guidance scheduling with guidance interval~\cite{kynkaanniemi2024applying}. After 600 training epochs, DiP achieved an FID of 1.79 without requiring a pre-trained VAE, surpassing potentially diffusion models such as DiT-XL (FID 2.27) and SiT-XL (FID 2.06), which require longer training times. DiP outperforms the previous best pixel-based model, PixelFlow-XL/4 (FID 1.98), and significantly exceeds others like ADM (FID 3.94) and VDM++ (FID 2.12). Even with a shorter training schedule of 160 epochs, our model reaches a competitive FID of 2.16, outperforming established models like DiT-XL that require much longer training.

Figure~\ref{Fig:vis_256} presents qualitative samples of DiP at $256 \times 256$ resolution. These visualizations reveal rich detail, demonstrating the effectiveness of introducing local inductive bias. More visualization samples are provided in Appendix.

\noindent\textbf{Computational Cost Comparison.}
DiP's parameter count (631M) is significantly smaller than other pixel models, such as VDM++ (2.0B) and Farmer (1.9B). DiP reaches its best performance with only 320 epochs, which is over 4$\times$ more efficient than DiT-XL and SiT-XL (1400 epochs) and substantially faster than many other pixel-based methods like CDM (2160 epochs). 
In single-image inference speed tests, DiP (0.92s) is more than 2.2$\times$ faster than DiT-XL (2.09s) and more than 8$\times$ faster than the previous best pixel model, PixelFlow-XL (7.50s). Furthermore, in 75-step inference, DiP (0.70s) achieved the same FID score as PixelFlow-XL with a speed more than 10$\times$ faster.

\subsection{Analysis}
\label{Analysis}

In this section, we analyzed the trade-off between generation quality and computational cost during the development of DiP, and at the same time explained the rationality of the Patch Detailer Head we designed.

\noindent\textbf{Patch Detailer Head vs. Scaling Up DiT.}
A common strategy to improve generative models is to increase the model size. However, our findings indicate that this is a suboptimal approach for pixel space diffusion models. As shown in Table~\ref{Tab: Analysis}, increasing the DiT's depth from 26 to 32 layers yields only a marginal improvement (FID from 5.28 to 4.91) at a considerable cost in parameters and training time. It also means that the effectiveness of our Patch Detailer Head comes from the introduction of effective local inductive biases, rather than increasing network depth.

In contrast, widening the model proves more effective for quality improvement. For instance, scaling the hidden dimension to 1536 reduces the FID to 2.83. This substantial quality gain comes at a prohibitive cost: a 74.9\% increase in parameters (from 629M to 1.1B), a 77.4\% rise in training cost (from 84$\times$8 to 149$\times$8 GPU hours), and a 69.3\% (from 0.88s to 1.49s) increase in inference latency. This highlights a critical challenge with monolithic scaling, where significant computational resources are required for performance.

\begin{figure}[t]
    \centering
    \includegraphics[width=\linewidth]{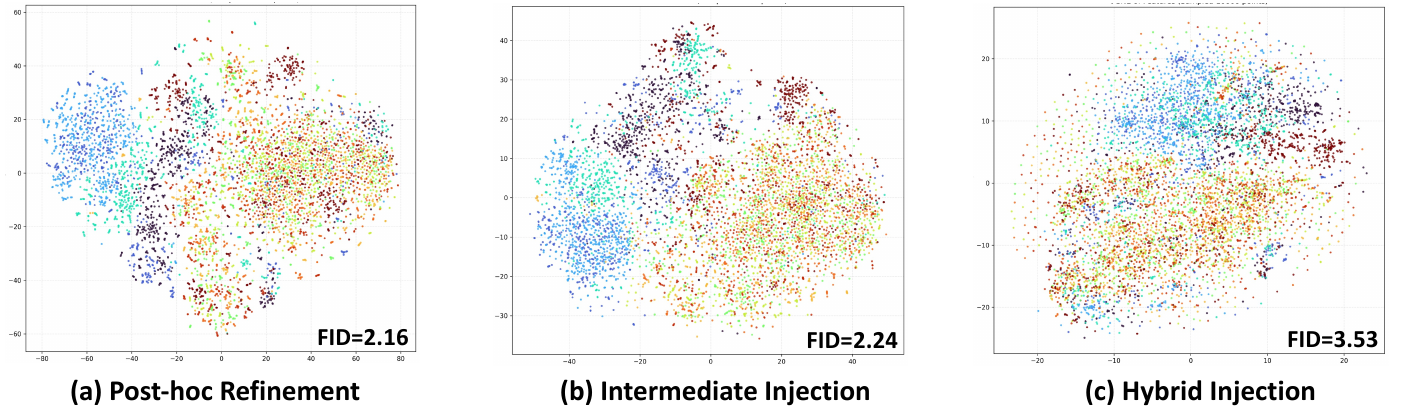}
    \vspace{-6mm}
    \caption{The t-SNE visualization of feature space. Features are extracted using Post-hoc Refinement, Intermediate Injection, and Hybrid Injection, with each class shown in a distinct color.}
    \label{Fig:vis_feature_place}
    \vspace{-5mm}
\end{figure}
\begin{table}[t]
	\centering
	\resizebox{\linewidth}{!}{
	\footnotesize
	\renewcommand\arraystretch{1.1}
\begin{tabular}{lcccccc}
\toprule
\multicolumn{1}{c}{\multirow{2}{*}{\textbf{Method}}} 

& \multicolumn{6}{c}{\textbf{ ImageNet 512×512}} \\

\cmidrule(lr){2-7} 
& \textbf{FID}$\downarrow$ 
& \textbf{sFID}$\downarrow$ 
& \textbf{Prec.}$\uparrow$
& \textbf{Rec.}$\uparrow$
&\textbf{IS}$\uparrow$
& \textbf{Params} \\

\midrule

\textbf{\textit{Latent Generative Models}} \\

\arrayrulecolor{gray!50} 
\cmidrule{1-7} 
\arrayrulecolor{black} 
DiT-XL~\cite{peebles2023scalable}     &3.04  &5.02  &\textbf{0.84}  &0.54  &240.8 &675M+86M \\
MaskDiT-G~\cite{zheng2023fast}  &2.50  &5.10  &0.83  &0.56  &256.3 &675M+86M \\
SiT-XL~\cite{ma2024sit}     &2.62  &\textbf{4.18}  &\textbf{0.84}  &0.57  &252.2 &675M+86M \\
FlowDCN-XL~\cite{wang2024flowdcn} &2.44  &4.53  &\textbf{0.84}  &0.54  &252.8 &618M+86M \\ 

\midrule
\textbf{\textit{Pixel Generative Models}} \\
\arrayrulecolor{gray!100} 
\cmidrule{1-7}
\arrayrulecolor{black} 
ADM~\cite{dhariwal2021diffusion}       &3.85  &5.86  &\textbf{0.84}  &0.53  &221.7 &554M \\
SiD~\cite{hoogeboom2023simple}       &3.02  &- &- &- &248.7 &2.00B \\
VDM++~\cite{kingma2023understanding}     &2.65  &- &- &- &278.1 &2.46B \\
RIN~\cite{jabri2022scalable}        &3.95  &- &- &- &216.0 &410M \\
\midrule
DiP-XL/32 &\textbf{2.31} &4.48 &\textbf{0.84} &\textbf{0.58} &\textbf{291.68} &631M \\

\bottomrule
\end{tabular}
}
\vspace{-2mm}
\caption{Comparison of the performance of different methods on ImageNet 512×512 with CFG. Performance metrics are annotated with $\uparrow$ (higher is better) and $\downarrow$ (lower is better). Our method remains competitive at higher resolutions.}
\label{Tabl: 512_exp}
\vspace{-5mm}
\end{table}

\noindent\textbf{Experimental Results of Exploring Patch Detailer Head Architectures.}
We further investigated different architectures for the Patch Detailer Head to understand the importance of inductive bias in local patch refinement. (1) The Standard MLP performs poorly (6.92 FID), even worse than the DiT-only baseline. This is expected, as it lacks any spatial inductive bias, treating patch pixels as an unordered set and failing to capture crucial local structures. (2) The Intra-Patch Attention shows a significant improvement over the MLP (FID 2.98). This indicates that content-aware relationships between pixels are valuable. Its training and inference costs are only slightly higher than our final choice, but its actual memory overhead is about twice that of the final solution. (3) The Coordinate-based MLP achieves 2.20 FID (we are based on a reproduction of~\cite{wang2025pixnerd}). By explicitly conditioning on pixel coordinates, it effectively introduces spatial awareness. However, it requires more parameters (700M) and a longer training time (123$\times$8 GPU hours) compared to our final choice, and its implicit continuous representation may lack the strong, built-in priors for local patterns that convolutions provide. (4) The Convolutional U-Net increases the number of parameters by only 0.3\% (from 629M to 631M) and achieves the best FID score with the lowest computational cost among all Patch Detailer Head. Its success can be attributed to highly relevant inductive biases of convolutions. It is well-suited for capturing and preserving the continuity of local textures and edges, making it the most efficient and effective architecture for performing patch-level detail optimizations.

In summary, our experimental results clearly demonstrates that introducing an appropriate local inductive bias via a Patch Detailer Head is key to performance improvement over the scaling up DiT baseline. Among the architectures explored, the Convolutional U-Net strikes the optimal balance between best generation quality and minimal computational cost, making it our definitive choice. 

\begin{figure}[t]
    \centering
    \includegraphics[width=\linewidth]{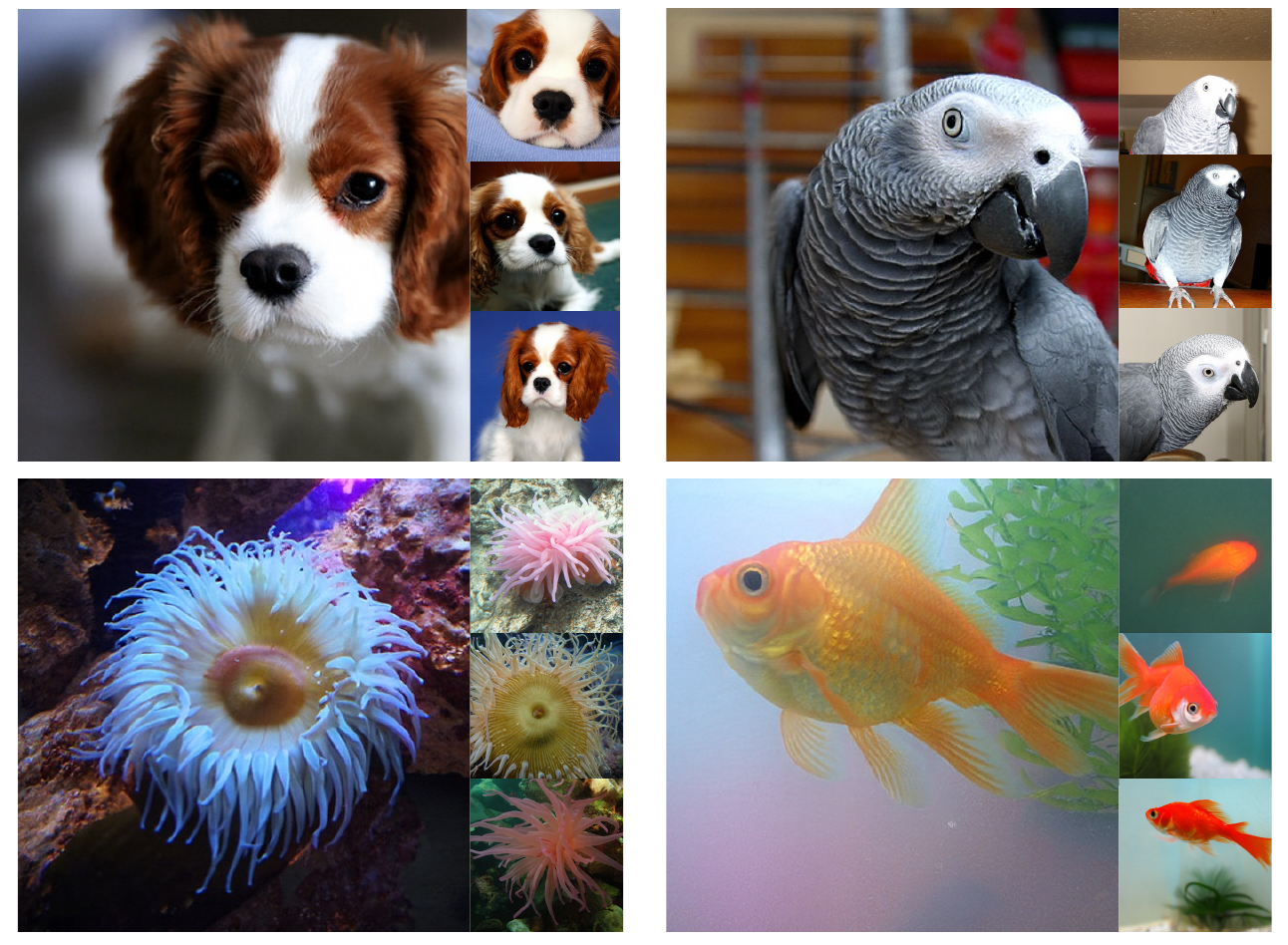}
    \vspace{-6mm}
    \caption{Qualitative samples from our model trained at 512$\times$512 resolution with classifier-free guidance scale of 4.0. DiP showcases fine-grained detail and rich diversity at higher resolutions.}
    \label{Fig:vis_512}
    \vspace{-4mm}
\end{figure}

\noindent\textbf{Experimental Results of Placement of the Patch Detailer Head.}
We tested the effects of introducing Patch Detailer Head at different locations in the model. As shown in Figure~\ref{Fig:vis_feature_place}, all three introduction modes showed significant improvements compared to DiT-only (FID 5.28). From the feature visualization results, Hybrid Injection performed worse in clustering than the other two, which may be due to multiple local inductive biases potentially disrupting the original structure, leading to performance degradation. Post-hoc Refinement achieves the best performance, a result  attributed to the synergy between global build and local refinement, while its implementation is simple and easily extensible.

\begin{figure}[t]
    \centering
    \includegraphics[width=\linewidth]{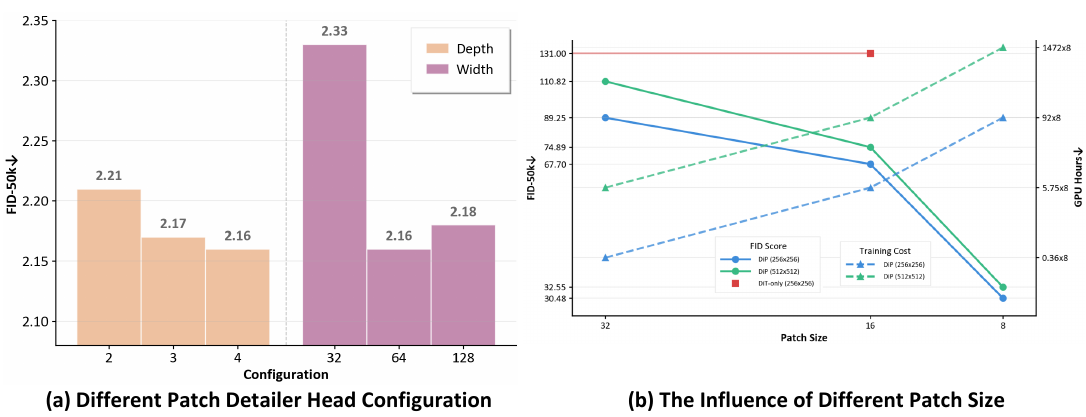}
    \vspace{-6mm}
    \caption{(a). Performance differences between different Patch Detailer Head configurations. Depth is defined as the number of down/up-sampling stages, and width corresponds to the number of base channels in the convolutional layers. (b). Performance and computational overhead differences of different patch sizes.}
    \label{Fig: ablation}
    \vspace{-4mm}
\end{figure}

\subsection{Ablation Study}
\noindent\textbf{Performance on ImageNet 512$\times$512.}
As shown in Table~\ref{Tabl: 512_exp}, on 512$\times$512 resolution, DiP achieved the best FID score , surpassing previous methods. Figure~\ref{Fig:vis_512} illustrates the sampling results at 512$\times$512, demonstrating that DiP can also generate high-quality images. We present more qualitative results in Appendix.

\noindent\textbf{Impact of Patch Detailer Head Configuration.}
Our findings reveal distinct trends for depth and width. As we increase the depth, we observe a consistent improvement in generation quality, although the gains exhibit diminishing returns and eventually saturate. This suggests that a multi-scale feature hierarchy is crucial for the Patch Detailer Head to effectively synthesize high-frequency details across. Conversely, blindly increasing the width will not lead to a sustained performance improvement. This indicates that the role of Patch Detailer Head is not to perform complex feature transformations, but rather to render specific details.

\noindent\textbf{Impact of Patch Size.}
Small patch size offer superior performance but also significantly increase computational overhead. 
Our method can use large patch to shorten the input sequence length, making our model's computational efficiency comparable to mainstream LDMs. As shown in Figure~\ref{Fig: ablation}(b), at the higher 512$\times$512 resolution, DiP maintains a significant performance margin over a DiT-only baseline using smaller patch size. This validates that our approach provides a robust solution for efficient, high-resolution synthesis directly in pixel space.

\vspace{-2mm}
\section{Conclusion}
In this paper, we addressed the fundamental trade-off between generation quality and computational efficiency in pixel diffusion models. We introduced DiP, a new end-to-end pixel space diffusion framework that resolves this dilemma through a synergistic global-local modeling approach. By employing a DiT backbone on large image patches, we achieve computational efficiency comparable to LDMs for modeling global structures. This is complemented by a co-trained, lightweight Patch Detailer Head that expertly restores high-frequency details, effectively bypassing the need for a VAE. 
Our extensive experiments on the ImageNet benchmark demonstrate that achieves superior FID scores with significantly lower inference latency and training costs. In the future, we plan to apply the DiP framework to text to image and text to video tasks to further explore the capabilities of this solution.
\section*{Acknowledgment}
This work was supported by Natural Science Foundation of Jiangsu Province: BK20241198, the Gusu Innovation and Entrepreneur Leading Talents: No. ZXL2024362 and Natural Science Foundation of China: No. 62406135.

{
    \small
    \bibliographystyle{ieeenat_fullname}
    \bibliography{main}
}

\clearpage
\setcounter{page}{1}

\maketitlesupplementary

\newcommand{\bd}{{\mathbf{d}}}
\newcommand{\x}{{\mathbf{x}}}
\newcommand{\y}{{\mathbf{y}}}
\newcommand{\z}{{\mathbf{z}}}
\newcommand{\w}{{\mathbf{w}}}
\newcommand{\p}{{\mathbf{p}}}
\newcommand{\ba}{{\mathbf{a}}}
\newcommand{\bv}{{\mathbf{v}}}
\newcommand{\bu}{{\mathbf{u}}}
\newcommand{\g}{{\mathbf{g}}}
\newcommand{\h}{{\mathbf{h}}}
\newcommand{\e}{{\mathbf{e}}}
\newcommand{\beps}{{\boldsymbol{\epsilon}}}
\newcommand{\bepss}{{\boldsymbol{\epsilon}}^2}
\newcommand{\bphi}{{\boldsymbol{\phi}}}
\newcommand{\tmu}{{\tilde\mu}}
\newcommand{\mulambda}[1]{\mu_{\lambda, #1}}
\newcommand{\alambda}{a_\lambda}
\newcommand{\tepsilon}{\tilde\epsilon}

\newcommand{\gauss}{{\boldsymbol{\xi}}}
\newcommand{\gz}{{\boldsymbol{\zeta}}}
\newcommand{\argmin}{\mathop{\mathrm{argmin}}}
\newcommand{\bdelta}{{\boldsymbol{\delta}}}

\newcommand{\I}{{\mathbf{I}}}
\newcommand{\A}{{\mathbf{A}}}
\newcommand{\B}{{\mathbf{B}}}
\newcommand{\U}{{\mathbf{U}}}
\newcommand{\D}{{\mathbf{D}}}
\newcommand{\C}{{\mathbf{C}}}
\newcommand{\W}{{\mathbf{W}}}
\newcommand{\V}{{\mathbf{V}}}
\newcommand{\M}{{\mathbf{M}}}
\newcommand{\bXi}{\boldsymbol\Xi}

\newcommand{\bbeta}{\bm{\beta}}

\newcommand{\N}{{\mathbb{N}}}
\newcommand{\R}{{\mathbb{R}}}
\newcommand{\E}{{\mathbb{E}}}
\newcommand{\tlam}{\tilde{\lambda}}

\newcommand{\tB}{{\tilde{\mathbf{B}}}}

\newcommand{\tr}{\mathrm{tr}}

\newcommand{\AGD}{\mathrm{AGD}}
\newcommand{\cO}{\mathcal{O}}
\newcommand{\hntf}{\hat\nabla_\rho\tilde f_\delta}

\newcommand{\effdim}{r}
\newcommand{\efftrace}{\mathrm{ET}}

\newtheorem{theorem}{Theorem}[section]
\newtheorem{proposition}[theorem]{Proposition}
\newtheorem{lemma}[theorem]{Lemma}
\newtheorem{corollary}[theorem]{Corollary}
\newtheorem{definition}[theorem]{Definition}
\newtheorem{assumption}[theorem]{Assumption}
\newtheorem{remark}[theorem]{Remark}
\newcommand{\upA}{\mathbf{A}}
\newcommand{\upB}{\mathbf{B}}
\newcommand{\upC}{\mathbf{C}}
\newcommand{\upD}{\mathbf{D}}
\newcommand{\upF}{\mathbf{F}}
\newcommand{\upG}{\mathbf{G}}
\newcommand{\upH}{\mathbf{H}}
\newcommand{\upI}{\mathbf{I}}
\newcommand{\upJ}{\mathbf{J}}
\newcommand{\upK}{\mathbf{K}}
\newcommand{\upL}{\mathbf{L}}
\newcommand{\upM}{\mathbf{M}}
\newcommand{\upN}{\mathbf{N}}
\newcommand{\upP}{\mathbf{P}}
\newcommand{\upQ}{\mathbf{Q}}
\newcommand{\upV}{\mathbf{V}}
\newcommand{\upX}{\mathbf{X}}
\newcommand{\upY}{\mathbf{Y}}
\newcommand{\upU}{\mathbf{U}}
\newcommand{\upE}{\mathbf{E}}
\appendix

\section{Why Patch Detailer Head: A Theoretical Perspective} \label{app:thm}

In this section, we try to provide a simplified theoretical analysis to further elucidate why we need local detail refinement in enhancing generation quality. From a general insight, we argue that DiT primarily focuses on the layout and arrangement of the dominant elements in the image, or in other words, the low-frequency signals of the global data. Consequently, it is less effective to learn local details and high-frequency signals. Through the refinement structure, we directly inject all signals from the global data into the learning process, which substantially improves the fine-grained processing of these high-frequency details.

Specifically, we follow the flow matching description of the diffusion process. Given an initial data sample $\x_0 \sim p_{{\rm data}}(\x_0) \in \mathbb{R}^d$ as the input, a Gaussian noise $\epsilon \sim \mathcal{N}(0,\mathbf{I}_d)$, and $t \in [0,1]$, let
\begin{equation}
    \x_t = (1-t)\x_0 + t\epsilon.
\end{equation}
In this paper, since all inputs are partitioned into patches of equal size, we first define the patch-level input as follows.

\begin{definition}[Patch-level Input]
    For each input $\x_0 \in \mathbb{R}^d$, we define the patch-level input as $\left\{\x_0^{(s)}\right\}_{s=1}^N$, where $\x_0^{(s)} \in \mathbb{R}^p$ and $\x_0 = \left[\left(\x_0^{(1)}\right)^\top,
    \cdots,\left(\x_0^{(N)}\right)^\top\right]^\top$, $Np=d$. 
\end{definition}

It is natural to represent the patch-level input by a series of selection matrices $\left\{ \upP^{(s)} \right\}_{s=1}^N$. For each $s$, $\upP^{(s)} \in \mathbb{R}^{p\times d}$ satisfies $\upP^{(s)}\left(\upP^{(s)}\right)^\top = \mathbf{I}_p$ and $\upP^{(s)}\x_0 = \x_0^{(s)} $.
The flow-based models try to minimize a loss function defined as $\mathcal{L}_{{\rm FM}} = \mathbb{E}_{t,\x_0, \epsilon} \left[ \left\Vert f(\x_t, t) - (\epsilon -\x_0)\right\Vert^2\right]$. Assuming that each patch is independent of one another, a patch-level predictor $f(\cdot, t)$ tries to estimate the patch-level objective field $\hat v^{(s)} = f\left(\x_t^{(s)},t\right)$ for each patch-level noised input $\x_t^{(s)} = (1-t) \x_{0}^{(s)}+t \epsilon^{(s)}$, where $\epsilon^{(s)} = \upP^{(s)}\epsilon \sim \mathcal{N}(0,\mathbf{I}_p)$. For the given $\mathcal{L}_{{\rm FM}}$, the optimal predictor is the conditional expectation $$\hat v^{(s),*} = \mathbb{E}\left[\epsilon^{(s)} -\x_0^{(s)}\ \middle|\ \x_t^{(s)}  \right].$$

However, in true generation tasks, each patch is not independent of others, because, for natural images, the boundaries between adjacent patches are typically continuous and smoothly varying (e.g., there is little difference between one patch of sky and another). The correlation between patches only weakens when an abrupt transition occurs in the image's elements, such as at the boundary between sky and grass. Moreover, DiT's attention-based structure allows a single patch to access partial information from all other patches. Although this information may be coarse, this remains a complex, coupled structure. Therefore, for a DiT model, the estimate of $\hat v^{(s)}$ is not only based on $\x_t^{(s)}$ but also some other information from $\left\{\x_t^{(l)} \right\}_{l \not= s}$. Thus, we define the effective information below.

\begin{definition}[Effective Information]
    For a patch-level noised input $\left\{\x_t^{(s)}\right\}_{s=1}^N$, we define ${\rm EI}^{(s)} \left(f;\left\{\x_t^{(s)}\right\}_{s=1}^N\right)$ to represent the effective information used for a generation model $f$ to estimate the patch-level vector field $\hat v^{(s)}$ for any $s \in [N]$.
\end{definition}

Assuming that each patch is independent of one another, the patch-level estimate $\hat v^{(s)} = f\left(\x_t^{(s)}, t\right)$ only uses $\x_t^{(s)}$ for prediction, which means ${\rm EI}^{(s)} \left(f;\left\{\x_t^{(s)}\right\}_{s=1}^N\right) = \left\{\x_t^{(s)}\right\}$. Thus the optimal predictor can be more generally formulated as $\hat v^{(s),*} = \mathbb{E}\left[\epsilon^{(s)} -\x_0^{(s)} \ \middle|\ {\rm EI}^{(s)} \left(f;\left\{\x_t^{(s)}\right\}_{s=1}^N\right)  \right]$. For attention-based generation models, we cannot accurately obtain the effective information due to the complex coupling structure. However, based on some standard assumptions on the initial data distribution and some empirical observations, we can still give a brief formulation for the effective information.

\begin{assumption}[Data Distribution]
    For the initial data distribution, we assume that $p_{\rm data} \sim \mathcal{N}(\mu,\mathbf{\Sigma})$, where $\mathbf{\Sigma} = \upU \mathbf{\Lambda} \upU^\top$, $\upU = [\bu_1, \cdots, \bu_d]$, and $\mathbf{\Lambda} = {\rm diag}\{\lambda_1,\cdots,\lambda_d\}$.
    \label{ass:pdata}
\end{assumption}
\begin{assumption}[Eigenvalue Decay] There exists $\alpha >1 $ such that for any $i \in [d] $, the eigenvalues of $\mathbf{\Sigma}$ satisfies $\lambda_i \asymp i^{-a} $. 
\label{ass:decay}
\end{assumption}

Assumption \ref{ass:pdata} and \ref{ass:decay} characterize the data distribution as a Gaussian distribution with a covariance of a series of fast-decay eigenvalues. The eigenvalue decay of covariance characterizes the differences in high- and low-frequency signals of the image information. This is consistent with the empirical observation that DiT can effectively learn low-frequency signals but has difficulty capturing high-frequency signals. Given $b >0$, we can decompose the input $\x_0$ into low- and high-frequency components as 
\begin{equation}
    \x_0 = \mu + \x_{0, {\rm low}} + \x_{0, {\rm high}},
\end{equation}
where $\x_{0, {\rm low}} \sim \mathcal{N}(0, \mathbf{\Sigma}_{{\rm low}})$ and $\x_{0, {\rm high}} \sim \mathcal{N}(0, \mathbf{\Sigma}_{{\rm high}})$. $ \mathbf{\Sigma}_{{\rm low}} $ satisfies $\mathbf{\Sigma}_{{\rm low}} = \U_r \mathbf{\Lambda}_r \U_r^\top = \sum_{i=1}^r \lambda_i \bu_i \bu_i^\top$ where $\lambda_r > b$ and $\lambda_{r+1} \leq b$, and $\mathbf{\Sigma}_{{\rm high}} = \mathbf{\Sigma} - \mathbf{\Sigma}_{{\rm low}}$. Thus we can decompose the patch-level noised input $\x_t^{(s)}$ as
\begin{equation}
        \x_t^{(s)} = \underbrace{(1-t) \upP^{(s)} \mu}_{{\rm Mean}^{(s)}} + \underbrace{(1-t) \upP^{(s)} \x_{0, {\rm low}}}_{{\rm Low}^{(s)}} + \underbrace{(1-t) \upP^{(s)} \x_{0, {\rm high}}}_{{\rm High}^{(s)}} + \underbrace{t \upP^{(s)} \epsilon}_{{\rm Noise}^{(s)}}
    \end{equation}
We can assume that for the DiT model, the effective information is composed of the local patch itself and the low-frequency signals of other patches as below.

\begin{assumption}[EI of DiT]
    Given DiT as the predictor, there exists $\beta >0$ such that for any $s \in [N]$, the effective information to estimate the patch-level vector $\hat v^{(s)}$ satisfies 
    \begin{equation}
        {\rm EI}^{(s)} \left({\rm DiT};\left\{\x_t^{(s)}\right\}_{s=1}^N\right) =  \left\{\x_t^{(s)}\right\} \cup \left\{\x_{t, {\rm low}}^{(l)}\right\}_{l \neq s},
    \end{equation}
    where 
    \begin{equation}
        \x_{t,{\rm low}}^{(l)} = {\rm Mean}^{(l)} + {\rm Low}^{(l)} + {\rm Noise}^{(l)}
    \end{equation} for all $l \not= s$.
    \label{ass:eidit}
\end{assumption}

Our refinement structure directly injects all signals from the initial data $\x_0$ for prediction, which means that for DiP, the effective information satisfies
\begin{equation}
    {\rm EI}^{(s)} \left({\rm DiP};\left\{\x_t^{(s)}\right\}_{s=1}^N\right) = {\rm EI}^{(s)} \left({\rm DiT};\left\{\x_t^{(s)}\right\}_{s=1}^N\right)  \cup \left\{\x_t^{(s)}\right\}_{s=1}^N = \left\{\x_t^{(s)}\right\}_{s=1}^N.
    \label{equ:eidip}
\end{equation}
We define $\hat v^{(s)}_{{\rm DiT}} = \mathbb{E}\left[\epsilon^{(s)} -\x_0^{(s)} \ \middle|\ {\rm EI}^{(s)} \left({\rm DiT};\left\{\x_t^{(s)}\right\}_{s=1}^N\right) \right]$ and $\hat v^{(s)}_{{\rm DiP}} = \mathbb{E}\left[\epsilon^{(s)} -\x_0^{(s)} \ \middle|\ {\rm EI}^{(s)} \left({\rm DiP};\left\{\x_t^{(s)}\right\}_{s=1}^N\right) \right]$ as the general near-optimal estimate of DiT and DiP, respectively. Then we obtain the main results below.

\begin{theorem}
    Assume that Assumption \ref{ass:pdata}, \ref{ass:decay} and \ref{ass:eidit} hold. Consider using DiT and DiP for the diffusion generation task as the predictor, respectively. The general near-optimal estimate $\hat v^{(s)}_{{\rm DiT}}$ and $\hat v^{(s)}_{{\rm DiP}}$ satisfy
    \begin{equation}
        \hat v^{(s)}_{{\rm DiT}} =\upP^{(s)} \hat\upB\hat\upM\left(\x_t - (1-t) \mu\right) -\upP^{(s)}\mu,
        \label{equ:vdit}
    \end{equation}
    and
    \begin{equation}
        \hat v^{(s)}_{{\rm DiP}} =\upP^{(s)} \upA\upM\left(\x_t - (1-t) \mu\right) -\upP^{(s)}\mu,
        \label{equ:vdip}
    \end{equation}
    respectively, where 
    \begin{equation}
        \begin{split}
            &\hat\upM = \left[ (1-t)^2 \mathbf{\Sigma}_{\rm low} + t^2 \mathbf{I}_d + (1-t)^2 \left(\upP^{(s)}\right)^\top\upP^{(s)}\mathbf{\Sigma}_{\rm high}\left(\upP^{(s)}\right)^\top\upP^{(s)} \right]^{-1}, \\
            &\hat\upB = t\mathbf{I}_d -(1-t)\mathbf{B}, \quad \mathbf{B}=\mathbf{\Sigma}_{\rm low} + \mathbf{\Sigma}_{\rm high}\left(\upP^{(s)}\right)^\top\upP^{(s)},\\
            &\upM = \left[ (1-t)^2 \mathbf{\Sigma} + t^2 \mathbf{I}_d \right]^{-1}, \\
            &\mathbf{A} = t\mathbf{I}_d -(1-t)\mathbf{\Sigma}.
        \end{split}
    \end{equation}
 The denoising operator $\upP^{(s)}\hat\upB\hat\upM$ and $\upP^{(s)} \upA\upM$ satisfies
     \begin{equation}
        \upP^{(s)}\hat\upB\hat\upM \asymp  \sum_{i=1}^r \frac{t-(1-t)\lambda_i}{(1-t)^2\lambda_i+t^2}\bv_i\bu_i^\top + \sum_{i=r+1}^d \frac{\lambda_i}{t}\bv_i\bu_i^\top + \mathcal{I}_1 + \mathcal{I}_2,
        \label{equ:ditdenoiser}
    \end{equation}
    and
    \begin{equation}
        \upP^{(s)}\mathbf{A}\upM = \sum_{i=1}^d \frac{t-(1-t)\lambda_i}{(1-t)^2\lambda_i +t^2}\bv_i\bu_i^\top,
        \label{equ:dipdenoiser}
    \end{equation}
    respectively, where $[\bv_1,\cdots, \bv_d] =\upP^{(s)}[\bu_1,\cdots, \bu_d] $, $\mathcal{I}_1 = -\sum_{i=r+1}^d \sum_{j=1}^r \frac{(1-t)\lambda_i}{(1-t)^2\lambda_j +t^2}\left[\bu_i^\top\left(\upP^{(s)}\right)^\top\upP^{(s)}\bu_j\right]\bv_i\bu_j^\top$ and $\mathcal{I}_2 = -\sum_{i=r+1}^d \sum_{j=r+1}^d \frac{(1-t)\lambda_i}{t^2}\left[\bu_i^\top\left(\upP^{(s)}\right)^\top\upP^{(s)}\bu_j\right]\bv_i\bu_j^\top$.
    \label{thm:main}
\end{theorem}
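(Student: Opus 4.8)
The claim is essentially a Gaussian‑conditioning computation, so the plan is to reduce everything to the formula for the conditional expectation of one jointly Gaussian vector given another. First I would establish the base case: since $p_{\rm data}=\mathcal N(\mu,\mathbf\Sigma)$ and $\epsilon\sim\mathcal N(0,\mathbf I_d)$ are independent, the pair $(\x_0,\x_t)$ with $\x_t=(1-t)\x_0+t\epsilon$ is jointly Gaussian; hence so is $(\epsilon^{(s)}-\x_0^{(s)},\,\text{EI}^{(s)})$ for whatever the effective information turns out to be. The target field $\hat v^{(s),*}=\mathbb E[\epsilon^{(s)}-\x_0^{(s)}\mid\text{EI}^{(s)}]$ is then an affine function of the conditioning variable, and I only need to compute the relevant cross‑ and auto‑covariances. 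I would first do the \textbf{DiP case}, where by \eqref{equ:eidip} the conditioning set is the whole $\x_t$: a direct computation gives $\mathrm{Cov}(\x_t)=(1-t)^2\mathbf\Sigma+t^2\mathbf I_d=\upM^{-1}$, $\mathrm{Cov}(\epsilon-\x_0,\x_t)=t\mathbf I_d-(1-t)\mathbf\Sigma=\mathbf A$, and $\mathbb E[\epsilon-\x_0\mid\x_t]=\mathbf A\upM(\x_t-(1-t)\mu)-\mu$; projecting by $\upP^{(s)}$ yields \eqref{equ:vdip}. Diagonalizing $\mathbf\Sigma=\upU\mathbf\Lambda\upU^\top$ and writing $\bv_i=\upP^{(s)}\bu_i$ gives \eqref{equ:dipdenoiser} immediately.

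**The DiT case.** Here the subtlety is that the conditioning variable is \emph{not} $\x_t$ but the reduced vector $\big\{\x_t^{(s)}\big\}\cup\big\{\x_{t,\rm low}^{(l)}\big\}_{l\neq s}$ from Assumption~\ref{ass:eidit}. The plan is to express this variable as $\upE\x_t + (\text{centering})$ for an appropriate linear map $\upE$ that keeps the $s$‑block intact but strips the high‑frequency part $\mathbf\Sigma_{\rm high}$ from every other block. Concretely I would let $\Pi^{(s)}=(\upP^{(s)})^\top\upP^{(s)}$ be the orthogonal projector onto the $s$‑th patch coordinates and decompose the off‑patch contribution so that the effective conditioning vector has covariance $(1-t)^2\mathbf\Sigma_{\rm low}+t^2\mathbf I_d+(1-t)^2\Pi^{(s)}\mathbf\Sigma_{\rm high}\Pi^{(s)}$ on the relevant subspace — this is exactly $\hat\upM^{-1}$ as written in the theorem. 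The cross‑covariance of $\epsilon^{(s)}-\x_0^{(s)}$ with this vector then produces $\hat\upB=t\mathbf I_d-(1-t)\mathbf B$ with $\mathbf B=\mathbf\Sigma_{\rm low}+\mathbf\Sigma_{\rm high}\Pi^{(s)}$, giving \eqref{equ:vdit}. The asymptotic form \eqref{equ:ditdenoiser} is then obtained by a Woodbury/block expansion of $\hat\upB\hat\upM$: on the low‑frequency eigenspaces ($i\le r$) the high‑frequency correction is negligible, so one recovers the same scalar multipliers $\tfrac{t-(1-t)\lambda_i}{(1-t)^2\lambda_i+t^2}$ as DiP; on the high‑frequency eigenspaces ($i>r$), since $\lambda_i\le b$ is tiny and $t^2$ dominates, $\hat\upM\approx t^{-2}\mathbf I$ on that block and $\hat\upB\approx t\mathbf I$, leaving the leading term $\tfrac{\lambda_i}{t}\bv_i\bu_i^\top$ plus the cross‑block residuals $\mathcal I_1,\mathcal I_2$ that come from $\Pi^{(s)}$ not commuting with $\upU$ (i.e.\ $\bu_i^\top\Pi^{(s)}\bu_j\neq\delta_{ij}$).

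**Where the work concentrates.** Steps that are genuinely routine: the Gaussian conditioning identity, the base‑case covariances, and the diagonalization giving \eqref{equ:dipdenoiser}. The step I expect to be the main obstacle is making Assumption~\ref{ass:eidit} precise enough to \emph{compute} with — i.e.\ turning "the effective information is the local patch plus the low‑frequency part of the others" into an honest linear map $\upE$ and verifying that $\mathrm{Cov}$ of $\upE\x_t$ restricted to its range is exactly $\hat\upM^{-1}$. The bookkeeping here is delicate because $\mathbf\Sigma_{\rm low},\mathbf\Sigma_{\rm high}$ live in the eigenbasis $\upU$ while the patch selection $\upP^{(s)}$ lives in the coordinate basis, so the two do not simultaneously diagonalize; the cross terms $\bu_i^\top\Pi^{(s)}\bu_j$ survive and are precisely what $\mathcal I_1$ and $\mathcal I_2$ collect. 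The second mild obstacle is the $\asymp$ in \eqref{equ:ditdenoiser}: one must invoke Assumption~\ref{ass:decay} ($\lambda_i\asymp i^{-\alpha}$, so $\lambda_{r+1}\le b$ small) to argue that the Woodbury correction terms on the $i\le r$ block are lower order and can be absorbed, keeping only the displayed leading term plus $\mathcal I_1+\mathcal I_2$. Once the map $\upE$ and its covariance are pinned down, the rest is a controlled matrix expansion.
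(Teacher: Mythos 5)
Your overall route is the same as the paper's: Gaussian conditioning for both models, with the DiP case conditioning on the full $\x_t$ (so that $\upM^{-1}=\mathrm{Cov}(\x_t)$ and $\mathbf{A}=\mathrm{Cov}(\epsilon-\x_0,\x_t)$, and \eqref{equ:dipdenoiser} follows by diagonalization), and the DiT case conditioning on a surrogate observation with covariance $\hat\upM^{-1}$, followed by a first-order (Neumann/Woodbury) expansion of the inverse covariance justified by Assumption~\ref{ass:decay}. One setup point needs care: the DiT conditioning set $\{\x_t^{(s)}\}\cup\{\x_{t,{\rm low}}^{(l)}\}_{l\neq s}$ is \emph{not} of the form $\upE\x_t$ for any linear map $\upE$, because $\x_{0,{\rm low}}$ cannot be extracted from $\x_t$ linearly. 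The paper instead constructs $\hat\x_t=(1-t)\mu+(1-t)\x_{0,{\rm low}}+t\epsilon+(1-t)\left(\upP^{(s)}\right)^\top\upP^{(s)}\x_{0,{\rm high}}$ directly as a linear function of the jointly Gaussian triple $(\x_{0,{\rm low}},\x_{0,{\rm high}},\epsilon)$, which is what licenses the conditioning formula; since the covariance you compute is exactly $\hat\upM^{-1}$ and the cross-covariance gives $\hat\upB$, this is a fixable imprecision rather than a wrong turn.

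The substantive gap is the high-frequency diagonal term in \eqref{equ:ditdenoiser}. Your own stated approximations on the block ${\rm span}\{\bu_i\}_{i>r}$ --- $\hat\upB\approx t\,\upI_d$ and $\hat\upM\approx t^{-2}\upI_d$ --- multiply to $t^{-1}$, which yields $\sum_{i>r}\tfrac{1}{t}\bv_i\bu_i^\top$, not the claimed $\sum_{i>r}\tfrac{\lambda_i}{t}\bv_i\bu_i^\top$; note that $t\,\upI_d$ restricted to that block is $t\sum_{i>r}\bu_i\bu_i^\top$ with no factor of $\lambda_i$, so the coefficient $\lambda_i/t$ does not come out of the computation you describe. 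You assert the target expression without a derivation that produces it, and since the entire qualitative conclusion (that DiT applies only negligible corrections to high-frequency components when $t$ is not small) rests on whether that coefficient is $\lambda_i/t$ or $1/t$, this is precisely the step you must carry out explicitly: write $\hat\upB=\hat\upC_1+\hat\upC_2$ and $\hat\upM^{-1}=\hat\upD+\hat\upE$ as in the paper's proof, justify $(\hat\upD+\hat\upE)^{-1}\asymp\hat\upD^{-1}$ via the decay assumption, and then track exactly which powers of $\lambda_i$ survive in the $i>r$ block rather than reading the answer off the theorem statement.
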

\begin{proof}
    See Appendix \ref{app:proof}.
\end{proof}


\begin{remark}
    Theorem \ref{thm:main} illustrates that our designed refinement mechanism exhibits a strong adaptive correction capability for high-frequency signals within the image. Specifically, \eqref{equ:vdit} and \eqref{equ:vdip} align with our objective to estimate the conditional expectation of $\epsilon -\x_0$ through DiT and DiP. The first term of \eqref{equ:vdit} and \eqref{equ:vdip} represents the estimate of noise, while the second term means the estimate of original data. We focus on the first term regarded as the ``denoising process'', with respective ``denoising operator'' \eqref{equ:ditdenoiser} and \eqref{equ:dipdenoiser} serve as a global representation characterization, performing denoising on different frequency-domain components derived from the original image.
    \begin{itemize}
        \item When using only DiT for estimation, Equation \eqref{equ:ditdenoiser} indicates that DiT can achieve a good adaptive fit for low-frequency signals ($i \leq r$, dominant signals). However, for high-frequency components in the image, relying solely on DiT may not provide sufficient representational capacity for these components. Specifically, 
        \begin{itemize}
            \item The first term in \eqref{equ:ditdenoiser} corresponds to the denoising process applied to the low-frequency signals. Although all these belong to the low-frequency regime, those with larger values of 
$\lambda_i $ ($i \leq r$) contain a stronger proportion of the original image content relative to noise. Consequently, as 
$\lambda_i$ decreases, the denoising operator adaptively selects a larger correction magnitude. (It can be readily demonstrated that 
$ \frac{t-(1-t)\lambda_i}{(1-t)^2\lambda_i+t^2}$ exhibits a monotonic increase as $\lambda_i$ decreases.) 
\item  The remaining three terms correspond to the denoising process applied to the high-frequency components. Among them, $\mathcal{I}_1$ represents the consistent influence exerted by the low-frequency signal on the high-frequency ones. Since the high-frequency components are associated with $\lambda_i$ ($i \ge r+1$) that are significantly smaller than those of the low-frequency regime (Assumption \ref{ass:decay}), this term can generally be regarded as $o(\mathbf{1})$. The second term and $\mathcal{I}_2$—particularly the second term—point to the following fact: when $t \to 1$ in the early stage of denoising, the magnitude of $\lambda_i$ is much smaller than $t$, leading the model to apply only negligible corrections to the high-frequency components. This weakens the model’s ability to learn from this portion of the signal. Conversely, as $t \to 0$ in the late stage of denoising, where the model aims to learn a sensitive compensatory mechanism to capture more fine-grained details from the original image, $t$ becomes much smaller than $\lambda_i$, causing the model’s corrections to high-frequency components to lose stability. As a result, these corrections may introduce inconsistencies with the previously learned representations, potentially affecting fine details of the final output.

        \end{itemize}

        \item In contrast, when using DiP for estimation, Equation \eqref{equ:dipdenoiser} demonstrates that DiP can provide a robust adaptive correction for all signals. This is attributed to our refinement mechanism, which, at a low computational cost, enhances the effective information during the denoising process. Particularly for high-frequency signals, the refinement provides a powerful supplement to the information that DiT struggles to capture, which aligns with both our intuition and experimental results.
    \end{itemize}
\end{remark}

\section{Proof of Theorem \ref{thm:main}}\label{app:proof}
\begin{proof}
Based on \eqref{equ:eidip}, we have $\hat v^{(s)}_{{\rm DiP}} = \mathbb{E}\left[ \epsilon^{(s)} - \x_0^{(s)} \ \middle|\ \left\{\x_t^{(s)}\right\}_{s=1}^N \right] = \mathbb{E}\left[ \epsilon^{(s)} \ \middle|\ \x_t \right] - \mathbb{E}\left[ \x_0^{(s)} \ \middle|\ \x_t \right]$ . We first obtain the following statistics to obtain the first term $\mathbb{E}\left[ \epsilon^{(s)} \ \middle|\ \x_t \right]$.\\
Expectations: 
\begin{equation}
    \begin{split}
        \mathbb{E}\left[\epsilon^{(s)}\right] = 0, \ \ \mathbb{E}\left[\x_t\right] = (1-t) \mu, 
        \end{split}
\end{equation}
Covariances:
\begin{equation}
    \begin{split}
        \text{Cov}\left(\epsilon^{(s)}, \x_t\right) &= \text{Cov}\left(\upP^{(s)} \epsilon, (1-t) \x_0 + t \epsilon\right) = t \upP^{(s)} \text{Cov}(\epsilon, \epsilon) = t\upP^{(s)},
    \end{split}
\end{equation}
and 
\begin{equation}
    \begin{split}
        \text{Cov}(\x_t) &= \text{Cov}\left((1-t) \x_0 + t \epsilon\right) =(1-t)^2 \text{Cov}(\x_0) + t^2\text{Cov}(\epsilon) = (1-t)^2 \mathbf{\Sigma} + t^2 \mathbf{I}_d.
    \end{split}
\end{equation}
Then we use $\mathbb{E}[Y|X] = \mathbb{E}Y + \text{Cov}(Y,X) \text{Cov}(X,X)^{-1} (X - \mathbb{E}X)$ to obtain that 
\begin{equation}
    \mathbb{E}\left[ \epsilon^{(s)} \ \middle|\ \x_t \right] =t \upP^{(s)} \left[ (1-t)^2 \mathbf{\Sigma} + t^2 \mathbf{I}_d \right]^{-1} \left(\x_t - (1-t) \mu\right).
\end{equation}
Similarly, for the second term of $\hat v^{(s)}_{{\rm DiP}}$ we have \begin{equation}
    \begin{split}
        \text{Cov}\left(\x_0^{(s)}, \x_t\right) &= \text{Cov}\left(\upP^{(s)} \x_0, (1-t) \x_0 + t \epsilon\right) = (1-t) \upP^{(s)} \text{Cov}(\x_0, \x_0) = (1-t)\upP^{(s)}\mathbf{\Sigma},
    \end{split}
\end{equation} 
Then we obtain
\begin{equation}
    \mathbb{E}\left[ \x_0^{(s)} \ \middle|\ \x_t \right] = \upP^{(s)}\mu + (1-t)\upP^{(s)}\mathbf{\Sigma}\left[ (1-t)^2 \mathbf{\Sigma} + t^2 \mathbf{I}_d \right]^{-1} \left(\x_t - (1-t) \mu\right).
\end{equation}
Thus we have 
\begin{equation}
\begin{split}
    \hat v^{(s)}_{{\rm DiP}} &= \mathbb{E}\left[ \epsilon^{(s)} \ \middle|\ \x_t \right] - \mathbb{E}\left[ \x_0^{(s)} \ \middle|\ \x_t \right] \\
    &=\upP^{(s)} \left[t\mathbf{I}_d -(1-t)\mathbf{\Sigma}\right]\left[ (1-t)^2 \mathbf{\Sigma} + t^2 \mathbf{I}_d \right]^{-1} \left(\x_t - (1-t) \mu\right) -\upP^{(s)}\mu.
\end{split}
\end{equation}
Letting $\upM = \left[ (1-t)^2 \mathbf{\Sigma} + t^2 \mathbf{I}_d \right]^{-1}$, $[\bv_1,\cdots, \bv_d] =\upP^{(s)}[\bu_1,\cdots, \bu_d] $, $\mathbf{A} = t\mathbf{I}_d -(1-t)\mathbf{\Sigma}$, we have 
\begin{equation}
\begin{split}
    \upP^{(s)}\mathbf{A}\upM &= \left(\sum_{j=1}^d \bv_j\bu_j^\top\right) \left( \sum_{i=1}^d \frac{t-(1-t)\lambda_i}{(1-t)^2\lambda_i +t^2}\bu_i\bu_i^\top\right) \\
    &= \sum_{i=1}^d \frac{t-(1-t)\lambda_i}{(1-t)^2\lambda_i +t^2}\bv_i\bu_i^\top.
\end{split}
\end{equation}

Similarly, based on Assumption \ref{ass:eidit}, we have $\hat v^{(s)}_{{\rm DiT}} = \mathbb{E}\left[ \epsilon^{(s)} -\x_0^{(s)} \ \middle|\ \left\{\x_t^{(s)}\right\} \cup \left\{\x_{t, {\rm low}}^{(l)}\right\}_{l \neq s} \right]$, where $\x_{t,{\rm low}}^{(l)} = (1-t) \upP^{(l)} \mu + (1-t)\upP^{(l)} \x_{0, {\rm low}} + t \upP^{(l)} \epsilon$. We first use one vector to represent the condition $\left\{\x_t^{(s)}\right\} \cup \left\{\x_{t, {\rm low}}^{(l)}\right\}_{l \neq s}$.  We try to construct an observation $\hat\x_t$ such that at $s$ patch, $\upP^{(s)}\hat\x_t = \x_t^{(s)}$, and at $l\not=s$ patch $\upP^{(l)}\hat\x_t = \x_{t,{\rm low}}^{(l)}$. The following $\hat\x_t$ satisfies the requirement above
\begin{equation}
    \hat\x_t = (1-t)\mu + (1-t) \x_{0, {\rm low}} + t \epsilon + (1-t) \left(\upP^{(s)}\right)^\top\upP^{(s)} \x_{0, {\rm high}}.
\end{equation}
Now we use the same technique to obtain $\mathbb{E}\left[ \epsilon^{(s)} \ \middle|\ \hat\x_t\right]$. The covariance terms satisfy
\begin{equation}
    \begin{split}
        \text{Cov}(\epsilon^{(s)}, \hat\x_t) &= \text{Cov}\left(\upP^{(s)} \epsilon,  t \epsilon\right) = t \upP^{(s)},
    \end{split}
\end{equation}
and
\begin{equation}
    \begin{split}
        \text{Cov}(\hat\x_t) &= \text{Cov}\left((1-t)\x_{0, {\rm low}} + t \epsilon + (1-t) \left(\upP^{(s)}\right)^\top\upP^{(s)} \x_{0, {\rm high}}\right) \\
        &=(1-t)^2 \text{Cov}(\x_{0, {\rm low}}) + t^2 \text{Cov}(\epsilon) + (1-t)^2 \left(\upP^{(s)}\right)^\top\upP^{(s)}\text{Cov}(\x_{0, {\rm high}})\left(\upP^{(s)}\right)^\top\upP^{(s)}\\
        &= (1-t)^2 \mathbf{\Sigma}_{\rm low} + t^2 \mathbf{I}_d + (1-t)^2 \left(\upP^{(s)}\right)^\top\upP^{(s)}\mathbf{\Sigma}_{\rm high}\left(\upP^{(s)}\right)^\top\upP^{(s)}.
    \end{split}
\end{equation}
Thus we have
\begin{equation}
    \mathbb{E}\left[ \epsilon^{(s)} \ \middle|\ \hat\x_t\right] =t\upP^{(s)} \left[ (1-t)^2 \mathbf{\Sigma}_{\rm low} + t^2 \mathbf{I}_d + (1-t)^2\left(\upP^{(s)}\right)^\top\upP^{(s)}\mathbf{\Sigma}_{\rm high}\left(\upP^{(s)}\right)^\top\upP^{(s)} \right]^{-1} \left(\hat\x_t - (1-t) \mu\right).
\end{equation}
For $\mathbb{E}\left[ \x_0^{(s)} \ \middle|\ \hat\x_t\right]$, we have
\begin{equation}
    \begin{split}
        \text{Cov}\left(\x_0^{(s)}, \hat\x_t\right) &= \text{Cov}\left(\upP^{(s)} \x_{0, {\rm low}} + \upP^{(s)} \x_{0, {\rm high}}, (1-t)\mu + (1-t) \x_{0, {\rm low}} + t \epsilon + (1-t) \left(\upP^{(s)}\right)^\top\upP^{(s)} \x_{0, {\rm high}}\right)\\
        &=\text{Cov}\left(\upP^{(s)} \x_{0, {\rm low}}, (1-t) \x_{0, {\rm low}}\right) + \text{Cov}\left(\upP^{(s)} \x_{0, {\rm high}}, \left(\upP^{(s)}\right)^\top\upP^{(s)} \x_{0, {\rm high}}\right) \\
        &= (1-t)\upP^{(s)}\left[\mathbf{\Sigma}_{\rm low} + \mathbf{\Sigma}_{\rm high}\left(\upP^{(s)}\right)^\top\upP^{(s)} \right].
    \end{split}
\end{equation} 
Thus we obtain 
\begin{equation}
    \begin{split}
        \mathbb{E}\left[ \x_0^{(s)} \ \middle|\ \hat\x_t\right] = &\upP^{(s)}\mu + (1-t)\upP^{(s)}\left[\mathbf{\Sigma}_{\rm low} + \mathbf{\Sigma}_{\rm high}\left(\upP^{(s)}\right)^\top\upP^{(s)} \right] \times \\
        &\left[ (1-t)^2 \mathbf{\Sigma}_{\rm low} + t^2 \mathbf{I}_d + (1-t)^2\left(\upP^{(s)}\right)^\top\upP^{(s)}\mathbf{\Sigma}_{\rm high}\left(\upP^{(s)}\right)^\top\upP^{(s)} \right]^{-1}\left(\x_t - (1-t) \mu\right).
    \end{split}
\end{equation}
Finally we have 
\begin{equation}
\begin{split}
    \hat v^{(s)}_{{\rm DiT}} &= \mathbb{E}\left[ \epsilon^{(s)} \ \middle|\ \hat\x_t \right] - \mathbb{E}\left[ \x_0^{(s)} \ \middle|\ \hat\x_t \right] \\
    &=\upP^{(s)} \left[t\mathbf{I}_d -(1-t)\mathbf{B}\right]\hat\upM\left(\x_t - (1-t) \mu\right) -\upP^{(s)}\mu,
\end{split}
\end{equation}
where $\mathbf{B}=\mathbf{\Sigma}_{\rm low} + \mathbf{\Sigma}_{\rm high}\left(\upP^{(s)}\right)^\top\upP^{(s)}$ and $\hat\upM = \left[ (1-t)^2 \mathbf{\Sigma}_{\rm low} + t^2 \mathbf{I}_d + (1-t)^2 \left(\upP^{(s)}\right)^\top\upP^{(s)}\mathbf{\Sigma}_{\rm high}\left(\upP^{(s)}\right)^\top\upP^{(s)} \right]^{-1}$.

Letting $\hat\upB = t\mathbf{I}_d -(1-t)\mathbf{B}$, we have
\begin{equation}
\begin{split}
    \upP^{(s)}\hat\upB\hat\upM &= \left(\sum_{j=1}^d \bv_j\bu_j^\top\right)\left(\hat\upC_1+ \hat\upC_2 \right)\left(\hat\upD +\hat\upE\right)^{-1},
\end{split}
\end{equation}
where 
\begin{equation}
\begin{split}
&\hat\upC_1 = \sum_{i=1}^r \left(t-(1-t)\lambda_i\right)\bu_i\bu_i^\top\\
    &\hat\upC_2 =t\sum_{i=r+1}^d \lambda_i\bu_i\bu_i^\top-(1-t)\sum_{i=r+1}^d \lambda_i \bu_i\bu_i^\top\left(\upP^{(s)}\right)^\top\upP^{(s)}, \\
    &\hat\upD =  \sum_{i=1}^r ((1-t)^2 \lambda_i + t^2) \bu_i \bu_i^\top + \sum_{i=r+1}^d t^2 \bu_i \bu_i^\top, \\
    &\hat\upE = (1-t)^2 \sum_{i=r+1}^d \lambda_i \left(\upP^{(s)}\right)^\top\upP^{(s)} \bu_i \bu_i^\top \left(\upP^{(s)}\right)^\top\upP^{(s)}.
\end{split}
\end{equation} We notice that $\hat\upD$ is a positive diagonal matrix and $\hat\upD^{-1}\hat\upE \asymp o(1)$ because Assumption \ref{ass:decay} shows that $\lambda_{p}/\lambda_{q} \asymp (q/p)^a \asymp o(1)$ for any $1 \leq q \leq r$ and $p \geq r+1$. Thus due to first-order Taylor expansion we have
\begin{equation}
    \begin{split}
        \left(\hat\upD +\hat\upE\right)^{-1} = \left( \mathbf{I}_d + \hat\upD^{-1}\hat\upE\right)^{-1}\hat\upD^{-1} 
        \approx \left(\mathbf{I}_d - \hat\upD^{-1}\hat\upE\right)\hat\upD^{-1} 
        =\hat\upD^{-1} - \hat\upD^{-1}\hat\upE\hat\upD^{-1} 
        \asymp \hat\upD^{-1}.
    \end{split}
\end{equation}
Therefore we obtain
\begin{equation}
\begin{split}
    \upP^{(s)}\hat\upB\hat\upM &\asymp  \left(\sum_{j=1}^d \bv_j\bu_j^\top\right)\left(\hat\upC_1+ \hat\upC_2 \right)\hat\upD^{-1} \\
    &=\sum_{i=1}^r \frac{t-(1-t)\lambda_i}{(1-t)^2\lambda_i+t^2}\bv_i\bu_i^\top + \sum_{i=r+1}^d \frac{\lambda_i}{t}\bv_i\bu_i^\top \\
    &~~~~~~\underbrace{-\sum_{i=r+1}^d \sum_{j=1}^r \frac{(1-t)\lambda_i}{(1-t)^2\lambda_j +t^2}\left[\bu_i^\top\left(\upP^{(s)}\right)^\top\upP^{(s)}\bu_j\right]\bv_i\bu_j^\top}_{\mathcal{I}_1} \\
    &~~~~~~\underbrace{-\sum_{i=r+1}^d \sum_{j=r+1}^d \frac{(1-t)\lambda_i}{t^2}\left[\bu_i^\top\left(\upP^{(s)}\right)^\top\upP^{(s)}\bu_j\right]\bv_i\bu_j^\top}_{\mathcal{I}_2}. \\
\end{split}
\end{equation}
We finish the proof.
\end{proof}

\clearpage
\section{More Implementation Details}

\begin{table}[h]
	\centering
	\setlength{\tabcolsep}{8pt} 
	\footnotesize  
	\renewcommand\arraystretch{1.2}  

	\begin{tabular}{lc}
		\toprule

		\textbf{DiT Architecture}           \\
		Input dim                          &256$\times$256$\times$3\\
        Num. layers                        &26 \\
        Hidden dim.                        &1152 \\
        Num. heads                         &16  \\
		\midrule
		\textbf{Patch Detailer Head Architecture}           \\
        DownSampling path               &16$\rightarrow$8$\rightarrow$4$\rightarrow$2$\rightarrow$1 \\
        UpSampling path                 &1$\rightarrow$2$\rightarrow$4$\rightarrow$8$\rightarrow$16 \\
        DownSampling channel            &3$\rightarrow$64$\rightarrow$128$\rightarrow$256$\rightarrow$512 \\
        Bottleneck                      &(512+1152)$\rightarrow$512 \\
        UpSampling channel              &512$\rightarrow$256$\rightarrow$128$\rightarrow$64$\rightarrow$64 \\
        Output Layer                    &64→3 \\
        \midrule
        \textbf{Optimization}  \\
        Optimizer &AdamW  \\
        Learning rate &0.0001 \\
        Weight decay &0 \\
        Batch size &256 \\
        \midrule
        \textbf{Interpolants}  \\
        Diffusion sampler &Euler \\
        Diffusion steps &100 \\
        Evaluation suite &ADM \\
		\bottomrule
	\end{tabular}
	\caption{Hyperparameter settings.}  
    \vspace{-5mm}
	\label{Tab: config}
\end{table}

\paragraph{Hypermarameters.} Table~\ref{Tab: config} reports the detailed hyperparameters of DiP, including the DiT Architecture, Patch Detailer Head Architecture, Optimization, and Interpolants.

\paragraph{Objective.} DiP follows the training objectives of DDT~\cite{wang2025ddt}. It is trained using flow matching as the objective function and regularized using representation alignment techniques. Further improvements could be made by introducing adversarial loss ~\cite{goodfellow2014generative}, perceptual loss~\cite{zhang2018unreasonable}.

\paragraph{Sampler.} We use Euler-Maruyama ODE sampler with 100 sampling steps by default. For DiT-only and DiP, we used the same inference hyperparameters.

\paragraph{Classifier-Free Guidance.} In our experiments, we employ Interval-based Classifier-Free Guidance~\cite{kynkaanniemi2024applying} (Interval-CFG). Specifically, we set the guidance scale to cfg=2.9. The guidance is activated exclusively within the normalized timestep interval of [0.11, 0.97].

\section{How to Preserving High-Frequency Signal: Patch or Image}

While the theoretical analysis in Appendix \ref{app:thm} establishes the need for all-frequency raw signals to refine missing high-frequency details, we also focus on how can this information be injected in the most effective way. Specifically, we are interested in whether patch-level input is better than image-level input, or vise versa, as shown in Figure \ref{Fig: patch_and_image_input}. Intuitively, the transformer structure in DiT has captured the long-distance dependencies, therefore we only need the specific high-frequency signals or details of the image. A toy experiment in Figure \ref{Fig:toy_exp} verifies this intuition.

In Figure~\ref{Fig:toy_exp}(a), through the patch-level input, the learned manifold (black) tightly adheres to the ground truth structure (orange), effectively capturing intricate branching patterns and sharp boundaries. In contrast, Figure~\ref{Fig:toy_exp}(b) reveals that global processing leads to over-smoothing. The learned distribution is more dispersed and struggles to lock onto fine structural details. 
This suggests that we only need the refinement structure to dedicate its capacity to high-frequency sensing without being distracted by long-distance dependencies. On the contrary, with image-level input the network tends to average out features across a broader spatial regime, resulting in a loss of sharp details in high-frequency regions.

\begin{figure*}[h]
    \centering
    \includegraphics[width=\linewidth]{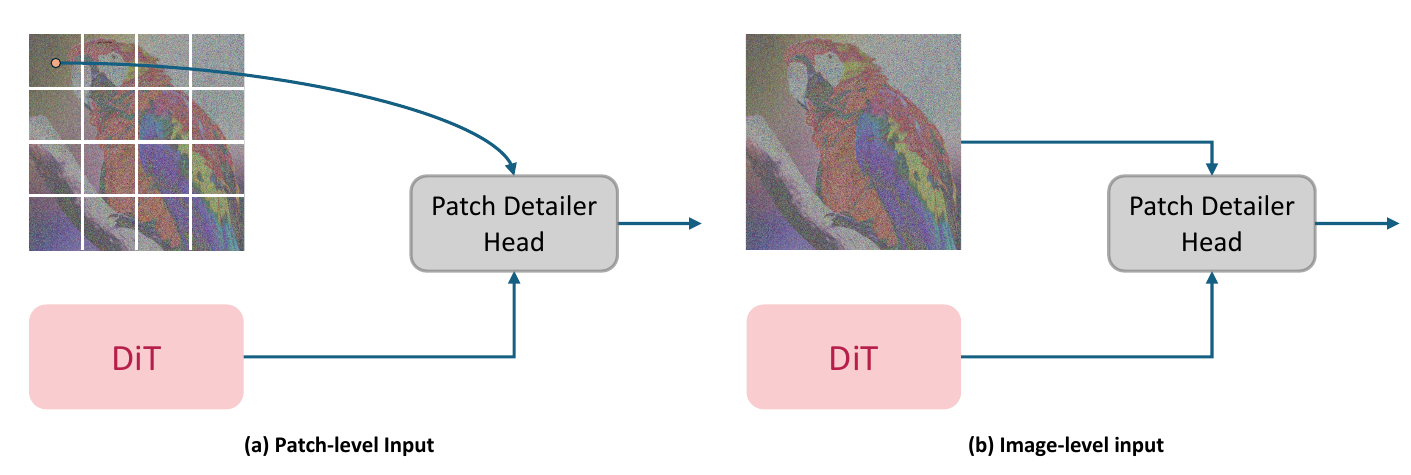}
    \caption{Different input formats of Patch Detailer Head.}
    \label{Fig: patch_and_image_input}
\end{figure*}

\begin{figure*}[h]
    \centering
    \includegraphics[width=\linewidth]{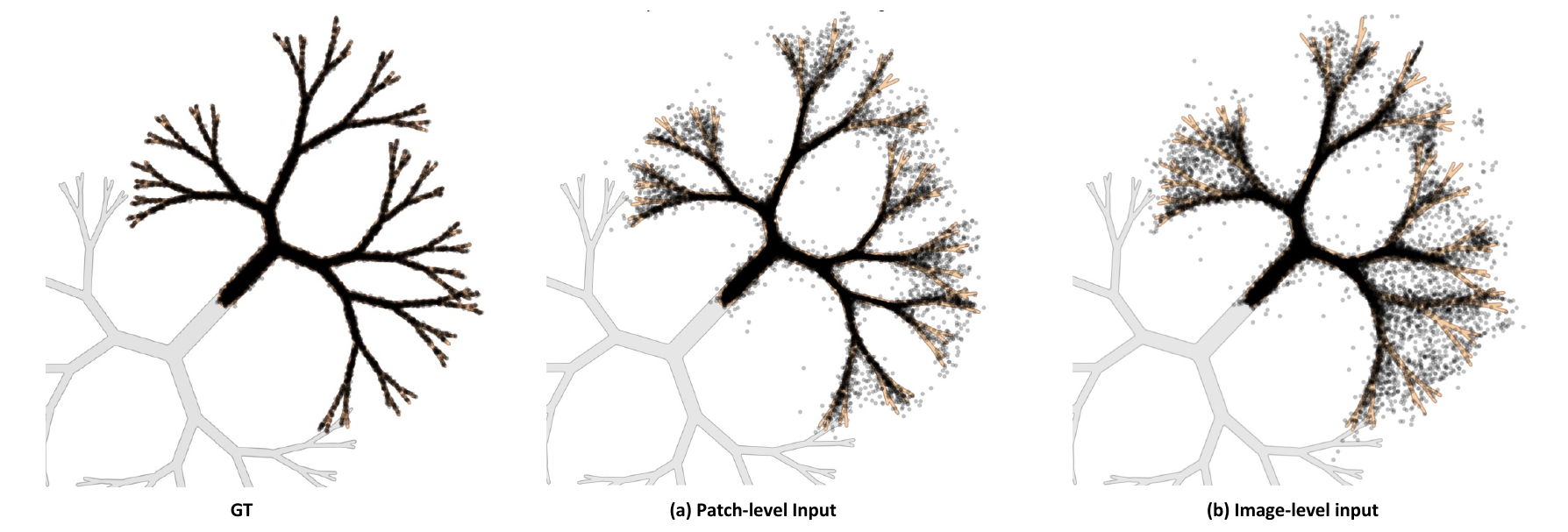}
    \caption{Toy experiment. (a) Visualization of manifold fitting with Patch-level input. The model precisely captures high-frequency branches. (b) Visualization of manifold fitting with Image-level input. The model exhibits over-smoothing and fails to resolve fine details.}
    \label{Fig:toy_exp}
\end{figure*}

\section{Alternative Patch Detailer Head (PDH).} 
Details of the alternative PDH are in Fig.~\ref{Fig:PDH_Variant}. The performance gap arises from the Convolutional U-Net's built-in inductive biases and hierarchical architecture, which better capture spatial detail and preserve local continuity. By contrast, alternative variants lack spatial information or are less effective at modeling local patterns, as explained in Sec. 3.4 (paper).
\begin{figure}[h]
    \centering
    \includegraphics[width=\linewidth]{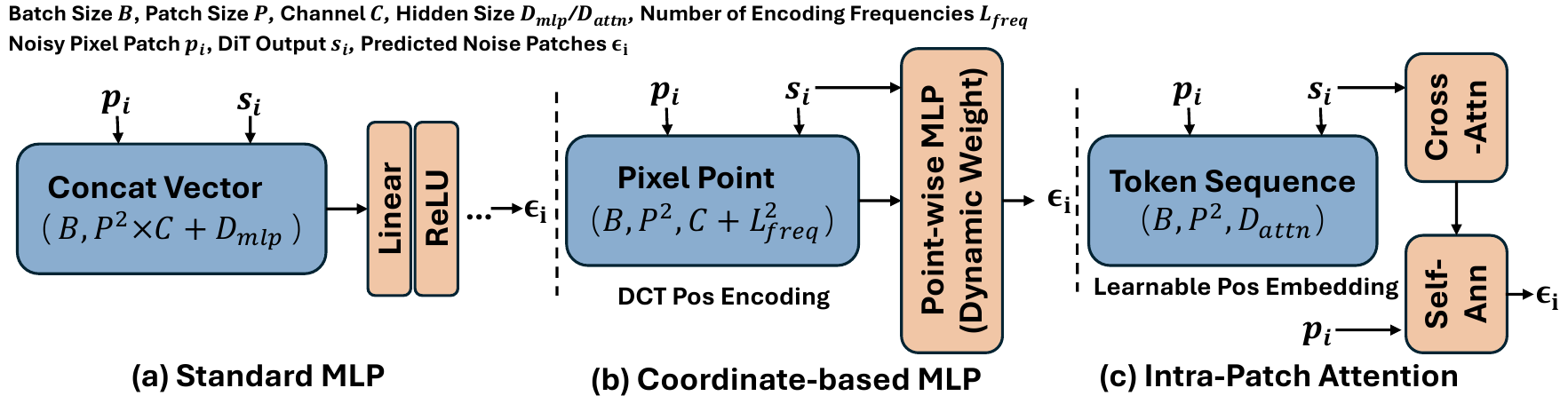}
    \caption{Details of other PDH.}
    \label{Fig:PDH_Variant}
\end{figure}



\section{More Visualization Results}
\begin{figure*}[h]
    \centering
    \includegraphics[width=\linewidth]{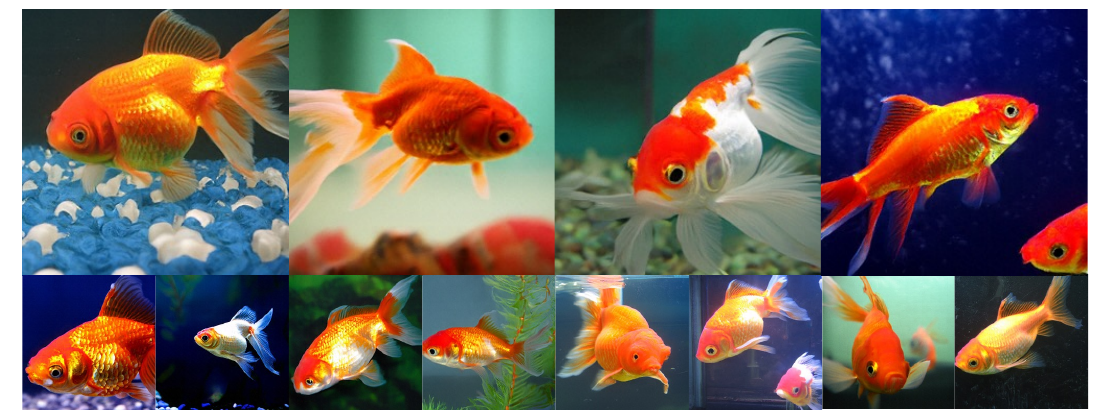}
    \caption{256$\times$256 samples. Class lable = “goldfish, Carassius auratus" (1). CFG = 4.0.}
    \label{Fig: 256_1}
    \vspace{-5mm}
\end{figure*}

\begin{figure*}[h]
    \centering
    \includegraphics[width=\linewidth]{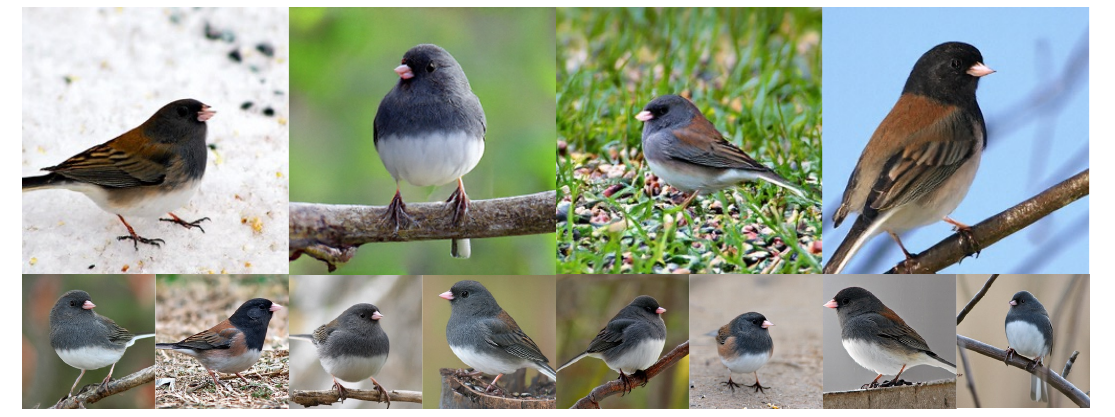}
    \caption{256$\times$256 samples. Class lable = “junco, snowbird" (13). CFG = 4.0.}
    \label{Fig: 256_2}
    \vspace{-5mm}
\end{figure*}

\begin{figure*}[h]
    \centering
    \includegraphics[width=\linewidth]{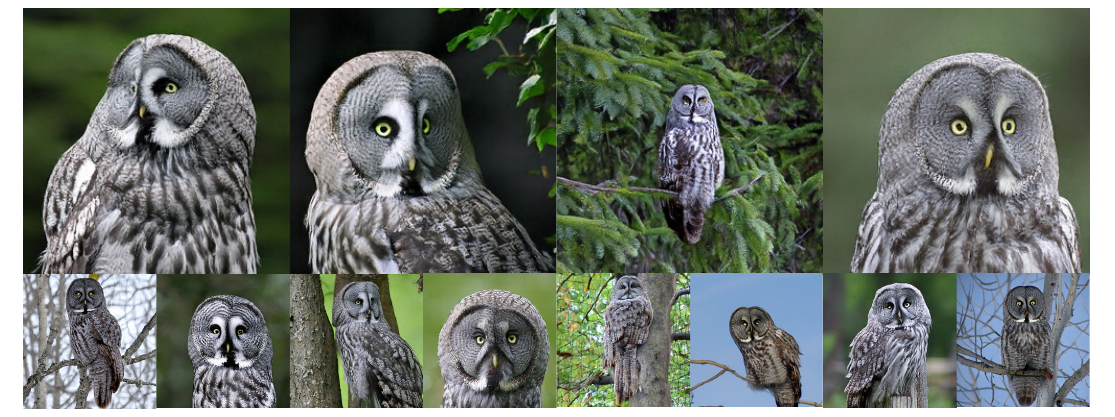}
    \caption{256$\times$256 samples. Class lable = “chickadee" (19). CFG = 4.0.}
    \label{Fig: 256_3}
    \vspace{-10mm}
\end{figure*}

\begin{figure*}[h]
    \centering
    \includegraphics[width=\linewidth]{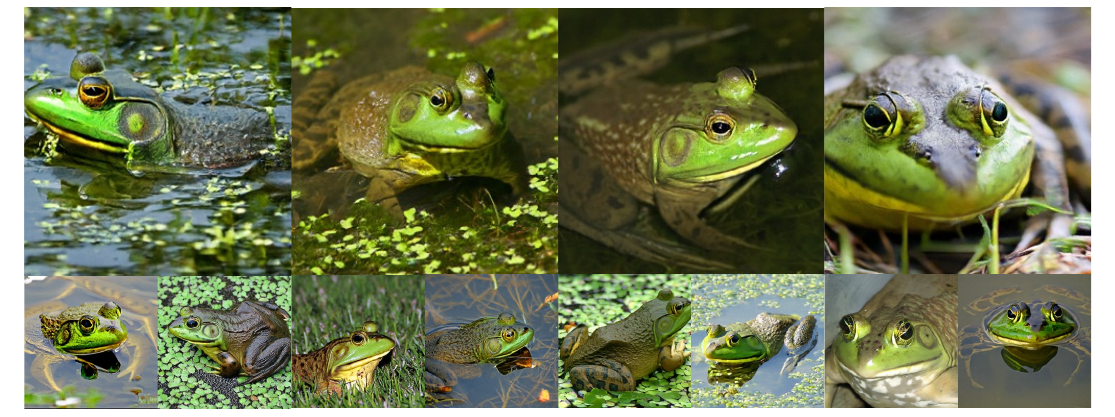}
    \caption{256$\times$256 samples. Class lable = “tree frog, tree-frog" (30). CFG = 4.0.}
    \label{Fig: 256_4}
\end{figure*}

\begin{figure*}[h]
    \centering
    \includegraphics[width=\linewidth]{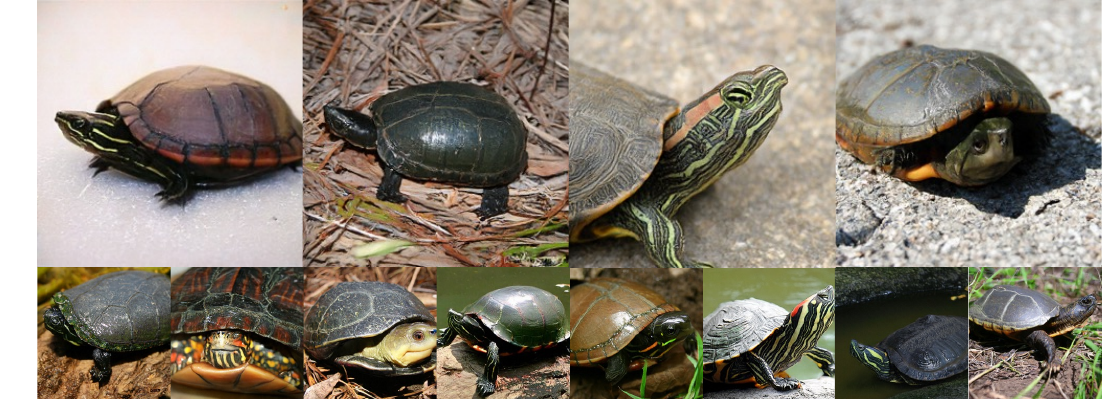}
    \caption{256$\times$256 samples. Class lable = “mud turtle" (35). CFG = 4.0.}
    \label{Fig: 256_5}
\end{figure*}

\begin{figure*}[h]
    \centering
    \includegraphics[width=\linewidth]{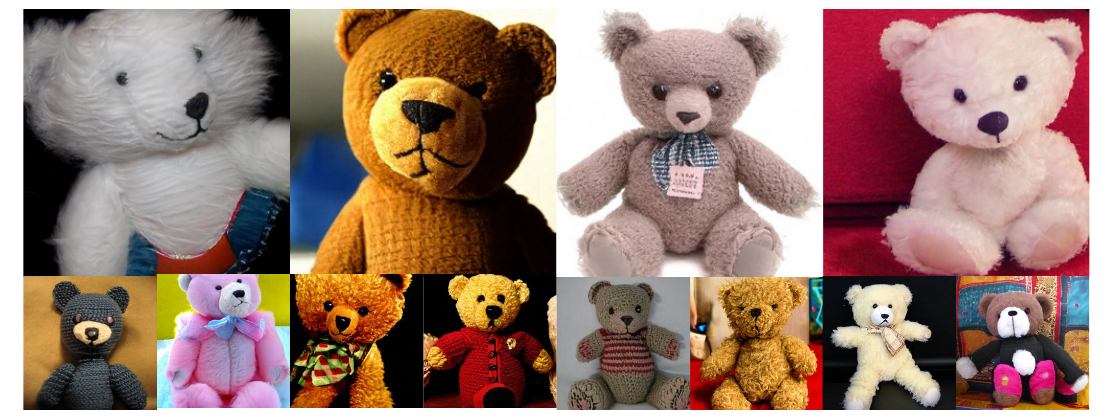}
    \caption{256$\times$256 samples. Class lable = “teddy, teddy bear" (859). CFG = 4.0.}
    \label{Fig: 256_9}
\end{figure*}

\begin{figure*}[h]
    \centering
    \includegraphics[width=\linewidth]{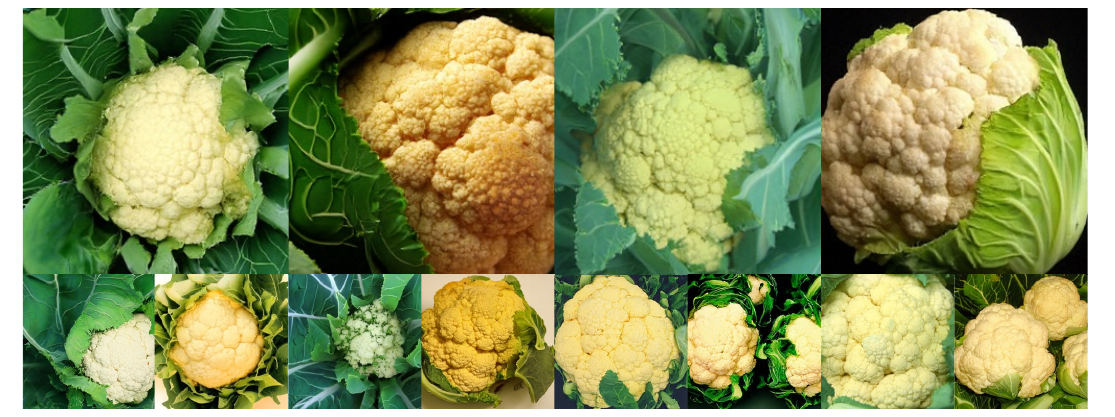}
    \caption{256$\times$256 samples. Class lable = “cauliflower" (938). CFG = 4.0.}
    \label{Fig: 256_8}
\end{figure*}

\begin{figure*}[h]
    \centering
    \includegraphics[width=\linewidth]{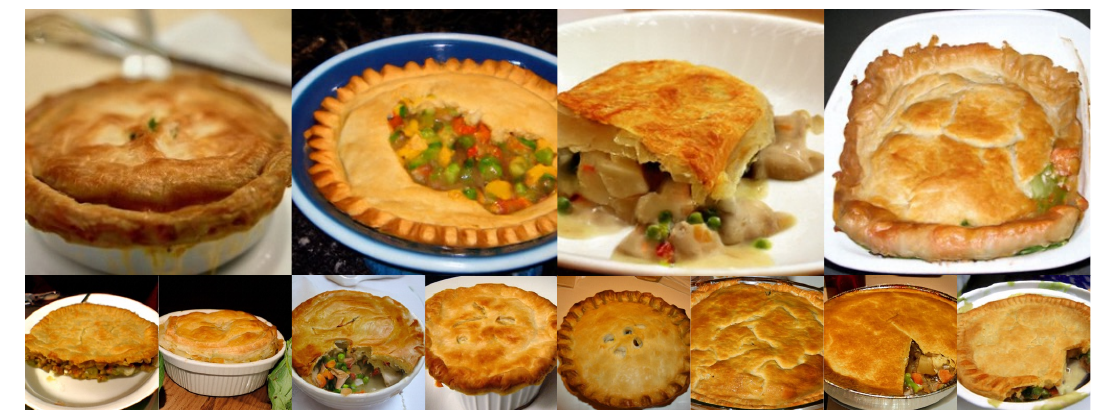}
    \caption{256$\times$256 samples. Class lable = “potpie" (964). CFG = 4.0.}
    \label{Fig: 256_7}
\end{figure*}

\begin{figure*}[h]
    \centering
    \includegraphics[width=\linewidth]{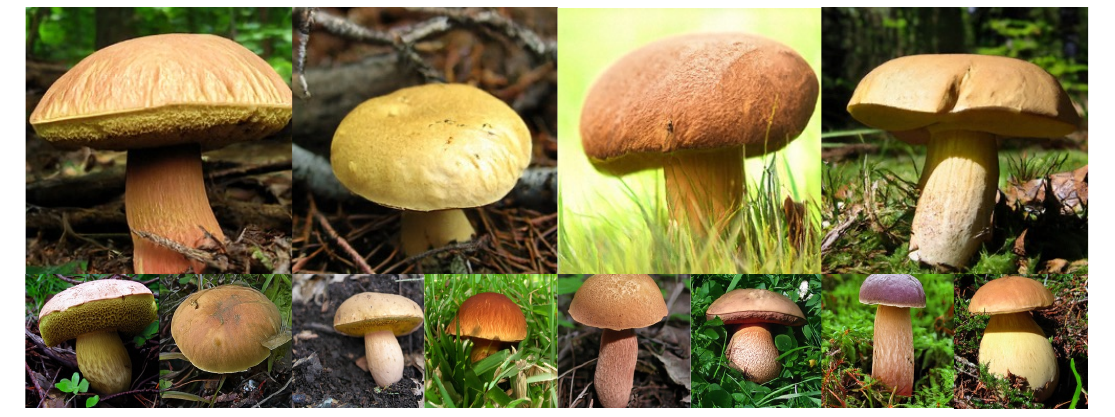}
    \caption{256$\times$256 samples. Class lable = “bolete" (997). CFG = 4.0.}
    \label{Fig: 256_6}
\end{figure*}

\begin{figure*}[h]
    \centering
    \includegraphics[width=\linewidth]{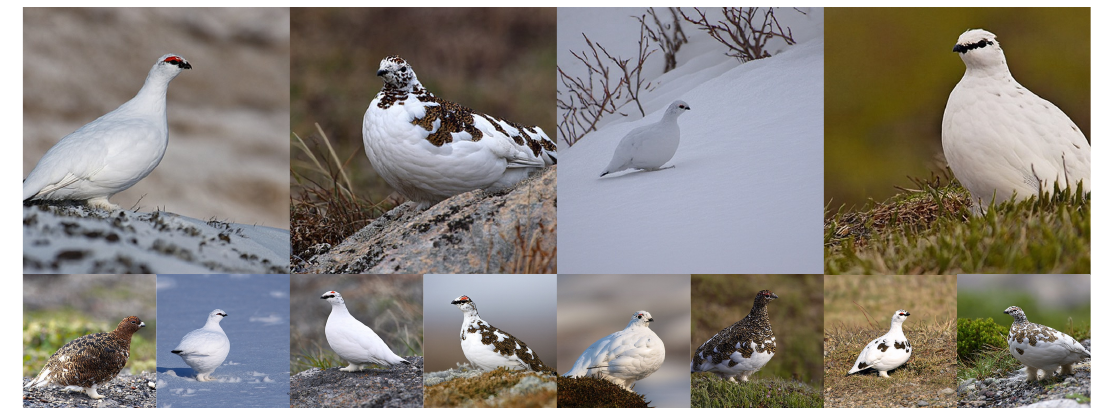}
    \caption{512$\times$512 samples. Class lable = “ptarmigan" (81). CFG = 4.0.}
    \label{Fig: 512_1}
\end{figure*}

\begin{figure*}[h]
    \centering
    \includegraphics[width=\linewidth]{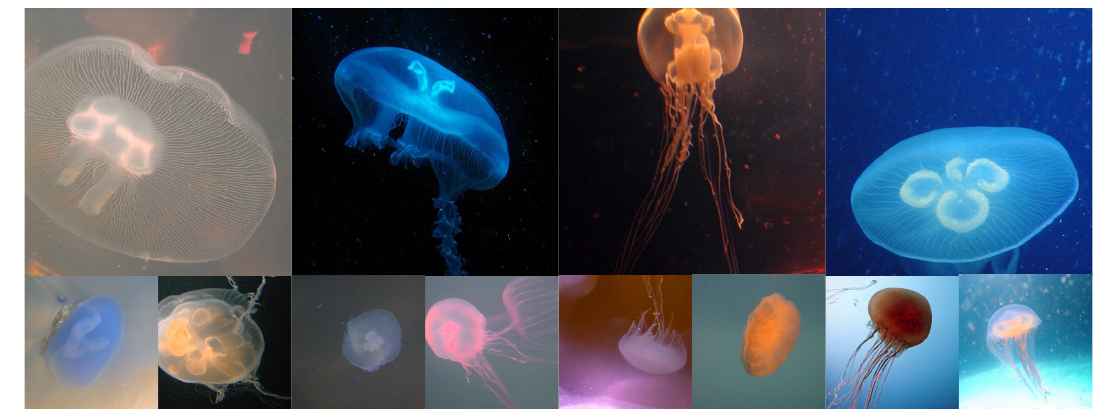}
    \caption{512$\times$512 samples. Class lable="jellyfish" (107). CFG=4.0.}
    \label{Fig: 512_2}
\end{figure*}


\begin{figure*}[h]
    \centering
    \includegraphics[width=\linewidth]{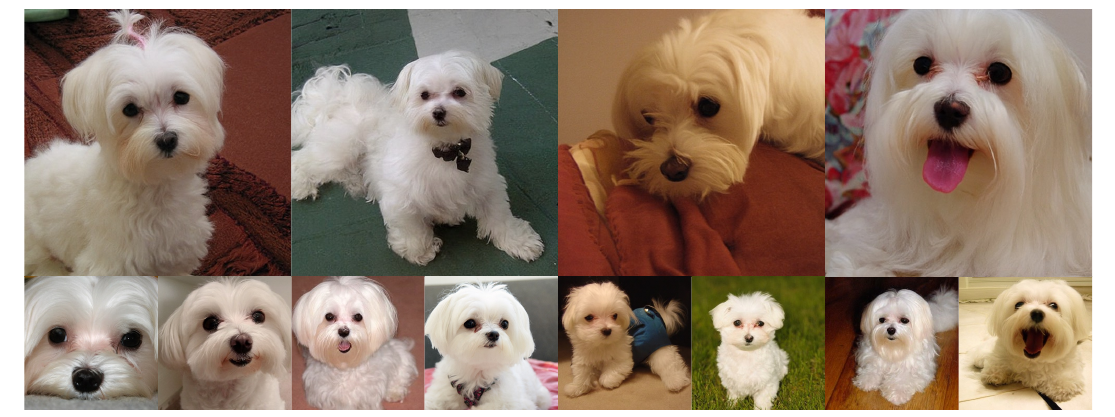}
    \caption{512$\times$512 samples. Class lable="Maltese dog, Maltese terrier, Maltese" (153). CFG=4.0.}
    \label{Fig: 512_4}
\end{figure*}


\begin{figure*}[h]
    \centering
    \includegraphics[width=\linewidth]{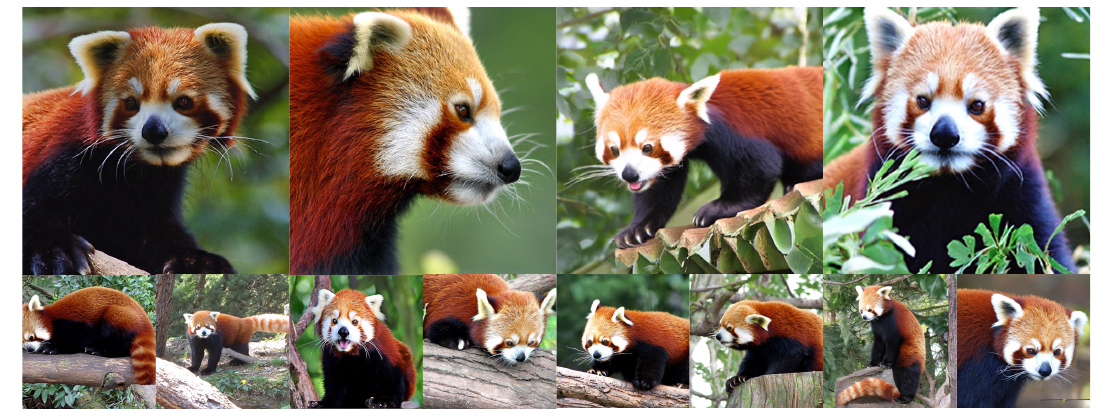}
    \caption{512$\times$512 samples. Class lable = “lesser panda, red panda, panda, bear cat, cat bear, Ailurus fulgens" (387). CFG = 4.0.}
    \label{Fig: 512_6}
\end{figure*}


\begin{figure*}[h]
    \centering
    \includegraphics[width=\linewidth]{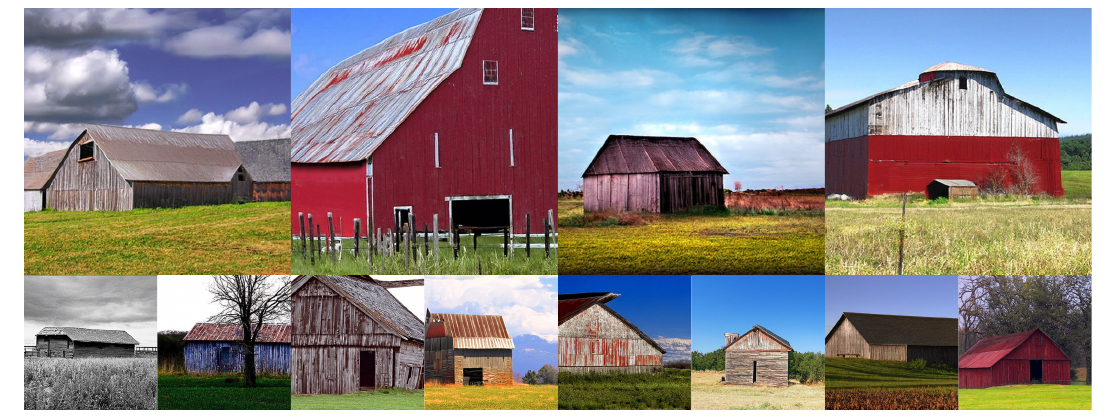}
    \caption{512$\times$512 samples. Class lable = “barn" (425). CFG = 4.0.}
    \label{Fig: 512_8}
\end{figure*}

\begin{figure*}[h]
    \centering
    \includegraphics[width=\linewidth]{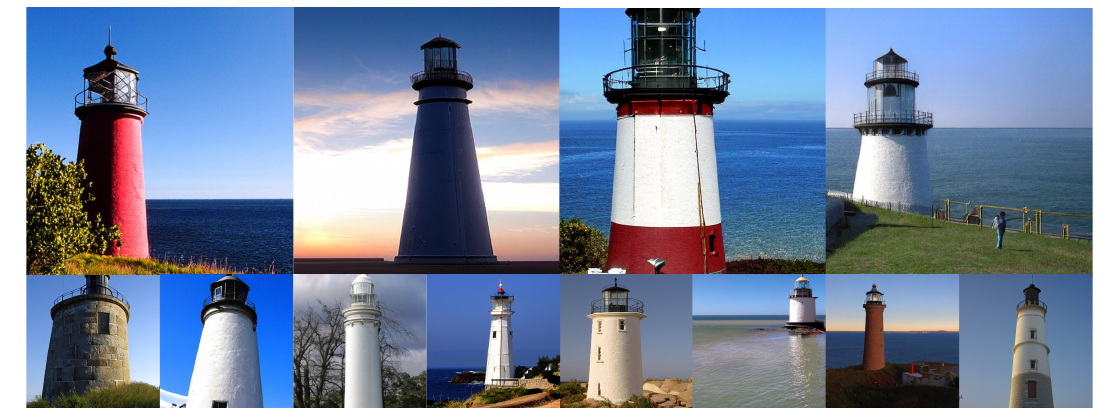}
    \caption{512$\times$512 samples. Class lable = “beacon, lighthouse, beacon light, pharos" (437). CFG = 4.0.}
    \label{Fig: 512_9}
\end{figure*}

\begin{figure*}[h]
    \centering
    \includegraphics[width=\linewidth]{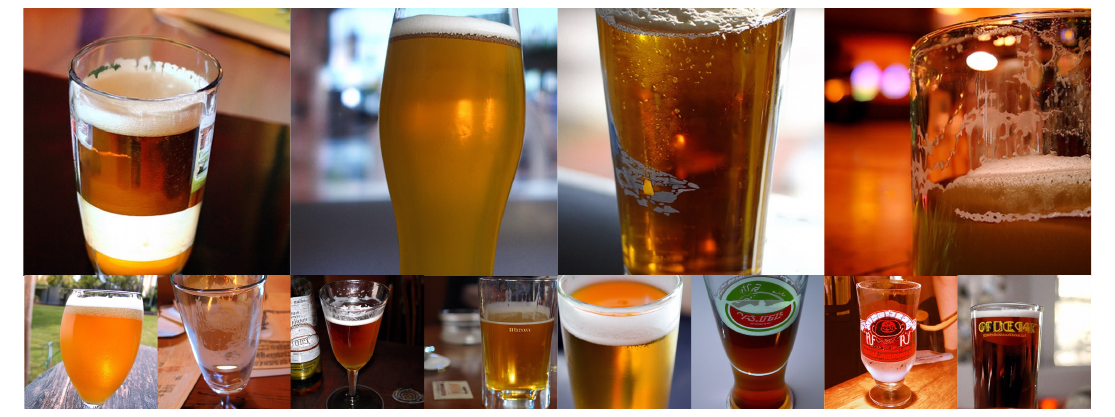}
    \caption{512$\times$512 samples. Class lable = “beer glass" (441). CFG = 4.0.}
    \label{Fig: 512_10}
\end{figure*}

\begin{figure*}[h]
    \centering
    \includegraphics[width=\linewidth]{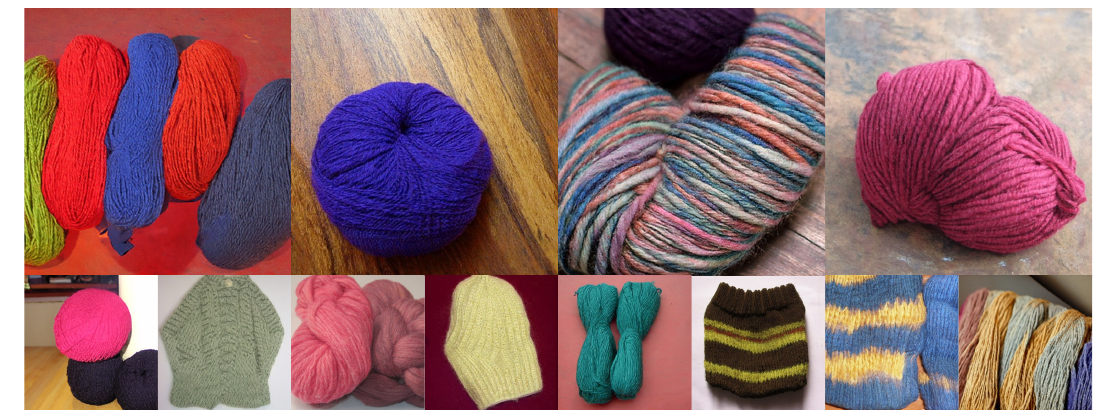}
    \caption{512$\times$512 samples. Class lable = “wool, woolen, woollen" (911). CFG = 4.0.}
    \label{Fig: 512_11}
\end{figure*}

\begin{figure*}[h]
    \centering
    \includegraphics[width=\linewidth]{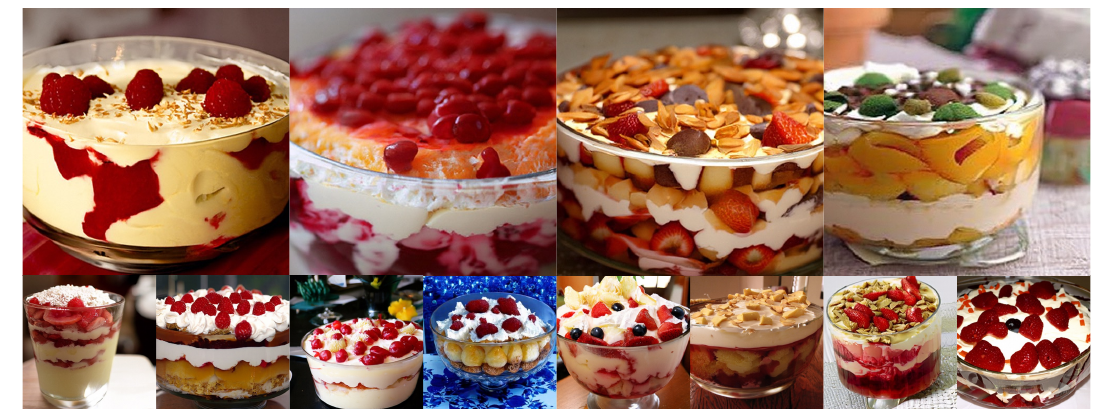}
    \caption{512$\times$512 samples. Class lable = “trifle" (927). CFG = 4.0.}
    \label{Fig: 512_12}
\end{figure*}

\end{document}